\author{Vihari Piratla\\
University of Cambridge\\
\texttt{vp421@cam.ac.uk}
\And
Juyeon Heo\thanks{equal contribution}, Katherine Collins\footnote[1]{}, Sukriti Singh\footnote[1]{}\\
University of Cambridge 
\And
Adrian Weller \\
University of Cambridge \\
Alan Turing Institute 
}
\newtheorem{prop}{Proposition}
\newtheorem{cor}{Corollary}
\title{Estimation of Concept Explanations Should be Uncertainty-Aware}
\newcommand{\mex}{model-to-be-explained}
\newcommand{\vx}{\mathbf{x}}
\newcommand{\vu}{\mathbf{u}}
\newcommand{\vv}{\mathbf{v}}
\newcommand{\vw}{\mathbf{w}}
\newcommand{\csim}{\text{cos-sim}}
\newcommand{\ours}{U-ACE}
\newcommand{\oursfull}{Uncertainty-Aware Concept Explanations}
\newcommand{\cs}{concept activations}
\newcommand{\simple}{\textit{Oracle}}
\newcommand{\new}[1]{\textcolor{black}{#1}}
\newcommand{\susi}[1]{} %\textcolor{red}{#1}}
\newcommand{\vp}[1]{} %\textcolor{blue}{#1}}
\definecolor{bleudefrance}{rgb}{0.19, 0.55, 0.91}
\newcommand{\jh}[1]{}% \textcolor{bleudefrance}{#1}}
\newcommand{\aw}[1]{}%\textcolor{green}{#1}}
\DeclareMathOperator*{\argmax}{arg\,max}
\DeclareMathOperator*{\argmin}{arg\,min}
\begin{document}
\maketitle

\begin{abstract}
% \vp{todo}
    Model explanations can be valuable for interpreting and debugging predictive models.
    We study a specific kind called Concept Explanations, where the goal is to interpret a model using human-understandable concepts.
    Although popular for their easy interpretation, concept explanations are known to be noisy. 
    We begin our work by identifying various sources of uncertainty in the estimation pipeline that lead to such noise. 
    We then propose an uncertainty-aware Bayesian estimation method to address these issues, which readily improved the quality of explanations. 
    We demonstrate with theoretical analysis and empirical evaluation that explanations computed by our method are robust to train-time choices while also being label-efficient.     
    Further, our method proved capable of recovering relevant concepts amongst a bank of thousands, in an evaluation with real-datasets and off-the-shelf models, demonstrating its scalability.
    We believe the improved quality of uncertainty-aware concept explanations make them a strong candidate for more reliable model interpretation. We release our code at \url{https://github.com/vps-anonconfs/uace}.

\end{abstract}

\section{Introduction}
\label{sec:intro}
\iffalse
- Why are explanations important?
- Why concept explanations interesting?
- Existing estimation methods
- What's wrong with them?
- What do we propose? 
\fi

As increasingly complex machine learning (ML) systems proliferate into real-world decision-making and reasoning, there is a growing need to be able to explain such systems. Concept-based explanations are a class of interpretable methods that explain predictions using high-level and semantically meaningful concepts~\citep{TCAV}. Concept explanations are aligned with how humans communicate their decisions~\citep{Yeh2022} and are shown~\citep{TCAV,Kim2023CHI} to be more preferable over explanations using salient input features~\citep{Ribeiro2016,Selvaraju2017} or salient training examples~\citep{Koh2017}. Concept explanations also show promise for encoding task-specific prior knowledge~\citep{PosthocCBMICLR2023} and propelling scientific discovery~\citep{Yeh2022}.

Concept explanations offer insight into a pretrained prediction model by estimating the importance of concepts using two human-provided resources: (1) a list of potentially relevant concepts for the task, and (2) a dataset of examples, often referred to as the ``probe-dataset''. Estimation typically proceeds in two steps: (a) compute the log-likelihood of a concept given an example called \cs{}, and (b) aggregate their per-example activation scores into a globally relevant explanation. For example, the concept {\it wing} is considered important if the information about the concept is encoded in all examples of the {\it plane} class in the dataset. Owing to their model-level granularity, concept explanations are easy to interpret and have witnessed wide recognition in diverse applications~\citep{Yeh2022}.

Notwithstanding their popularity, concept explanations are known to be noisy and data expensive.
\citet{Ramaswamy2022} showed that existing estimation methods are sensitive to the choice of concept set and dataset raising concerns over their interpretability. Another major limitation of concept-based explanation is the need for datasets with explicit concept annotations. Increasingly popular multimodal models such as CLIP~\citep{clip} present an exciting alternate direction to specify relevant concepts, especially for common image applications, through their text description. Recent work has explored using multimodal models for training Concept Bottleneck Models (CBMs)~\citep{Oikarinen2023,PosthocCBMICLR2023,Moayeri23}, but they are not yet evaluated for generating post-hoc concept explanations. 

Our objective is to improve the quality of concept explanations while also not requiring datasets with concept annotations. We begin by observing that existing estimation methods do not model uncertainty in the estimation pipeline, leading to high variance or noisy explanations. We identify at least \textit{two sources of uncertainty} in the standard estimation of concept explanations. First, \textit{when a concept is missing from the probe-dataset}, we cannot estimate its importance with confidence. Reporting uncertainty over estimated importance of a concept can thus help the user draw a more informed interpretation. Second, \textit{when a concept is hard to recognize, or irrelevant to the task}, the corresponding activations predicted from the representation layer of the \mex{} are expected to be noisy. 
% For example, it is harder to recognise the concept {\it whiskers} when compared with the concept {\it wings}. 
If not modelled, uncertainty over concept activations either due to their absence, hardness, or relevance cascades into noise in explanations. Appreciating the need to model uncertainty, we present an estimator called \textbf{\oursfull{} (\ours{})}, which we show is instrumental in improving quality of explanations.  

\paragraph{Contributions.}
$\bullet$ We motivate the need for modelling uncertainty for faithful estimation of concept explanations. 
$\bullet$ We propose a \textbf{Bayesian estimation method} called~\ours{} that is both \textbf{label-free} and \textbf{models uncertainty in the estimation of concept explanations}.  
$\bullet$ We demonstrate the merits of our proposed method~\ours{} through theoretical analysis and empirical evidence on two controlled and three real-world datasets. 

\iffalse
\begin{figure}
    \begin{subfigure}{\linewidth}
    \includegraphics[width=0.49\linewidth]{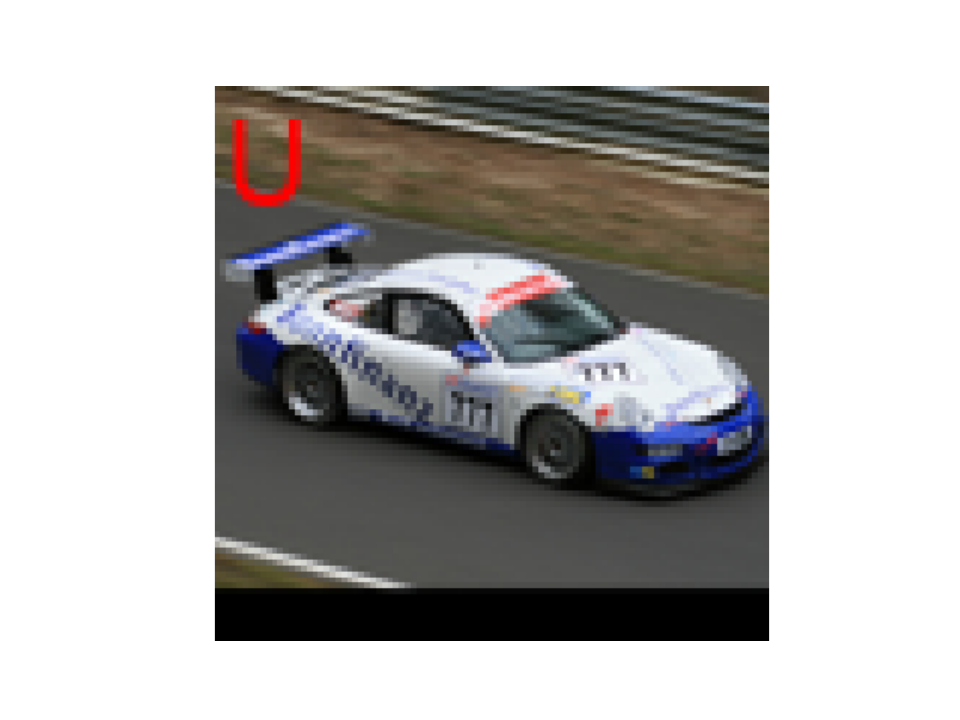}\hfill
    \includegraphics[width=0.49\linewidth]{}
    \end{subfigure}
    \newline
    \begin{tabular}{c}
% Don't indent the lstlisting environment!          
\begin{lstlisting}[numbers=none]
explanation for plane: 0.98 landing_gear + 0.95 sky - 0.4 tail_lights - 0.2 wheels   
\end{lstlisting}
             \end{tabular}
    \caption{Example of concept explanation for a prediction model to classify planes from cars.\jh{You get one explanation for each image, right? Then is this explanation for the right one? why do you include the first one? Better to compare O-CBM(or TCAV) and U-ACE using two different concept sets (with and without nuisance concept), showing O-CBM is too sensitive to the selection of concept sets}\vp{we get one for each class. Do not want to show O-CBM etc. Intent is just to explain what concept explanations look like. We can also add explanation car if that helps.}}
    \label{fig:cbe_example}
\end{figure}
\fi

\section{Background and Motivation}
\label{sec:background}
We denote the model-to-be explained as $f:\mathbb{R}^D\rightarrow \mathbb{R}^L$ that maps D-dimensional inputs to L labels. Let $f^{[l]}(\vx)$ denote the $l^{th}$ layer representation space and $f(\vx)[y]$ for $y\in [1, L]$ be the logit for the label $y$. Given a probe-dataset of examples $\mathcal{D}=\{\vx^{(i)}\}_{i=1}^N$ and a list of concepts $\mathcal{C}=\{c_1, c_2, \dots, c_K\}$, our objective is to explain the pretrained model $f$ using the specified concepts. Traditionally, the concepts are demonstrated using potentially small and independent datasets with concept annotations $\{\mathcal{D}_c^{k}: k\in [1, K]\}$ where $\mathcal{D}_c^{k}$ is a dataset with positive and negative examples of the $k^{th}$ concept.

Concept-Based Explanations (CBE) proceed in two steps. In the first step, they learn \textbf{concept activation vectors (CAVs)} that predict the concept from $l^{th}$ layer representation of an example. More formally, they learn the concept activation vector $v_k$ for $k^{th}$ concept by optimizing $v_k = \argmin_{v}\mathbb{E}_{(x, y)\sim \mathcal{D}_c^{(k)}}[\ell(v^Tf^{[l]}(\vx), y)]$ where $\ell$ is the usual cross-entropy loss. The inner product of the representation with the concept activation vector $v_k^Tf^{[l]}(\vx)$ is usually referred to as the \textbf{\cs{}}. Various approaches exist to aggregate example-specific concept activations into model-level explanations for the second step. \citet{TCAV} computes sensitivity of logits to interventions on concept activations to compute what is known as the CAV score per example per concept and report the fraction of examples in the probe-dataset with a positive CAV score as the global importance of the concept known as TCAV score.
% \citet{Schrouff2021bestofboth} proposed to combine TCAV scores with a saliency based
\citet{Zhou2018} proposed to decompose the classification layer weights as $\sum_k \alpha_k v_k$ and report the coefficients $\alpha_k$ as the importance score of the $k^{th}$ concept. We refer the reader to~\citet{Yeh2022} for an in-depth survey. 

{\bf Data-efficient concept explanations.} A major limitation of traditional CBEs is their need for datasets with concept annotations $\{\mathcal{D}_c^{1}, \mathcal{D}_c^{2}, \dots\}$. In practical applications, we may wish to find important concepts among thousands of potentially relevant concepts, which is not possible without expensive data collection. Recent proposals~\citep{PosthocCBMICLR2023,Oikarinen2023,Moayeri23} suggested using pretrained multimodal models like CLIP to evade the data annotation cost for a related problem called Concept Bottleneck Models (CBM)~\citep{Koh2020}. CBMs aim to train inherently interpretable model with a concept bottleneck. Although CBMs cannot generate explanations for a \mex{}, a subset of methods propose to train what are known as Posthoc-CBMs using the representation layer of a pretrained task model for data efficiency. Given that Posthoc-CBMs base on the representation of a pretrained task model, we may use them to generate concept explanations. We describe briefly two such CBM proposals below. 
% Since CBMs are highly related to CBEs, we evaluate two data-efficient proposals.

\citet{Oikarinen2023} (O-CBM) estimates the concept activation vectors by learning to linearly project from the representation space of CLIP where the concept is encoded using its text description to the representation space of the \mex{} $f$. It then learns a linear classification model on \cs{} and returns the weight matrix as the concept explanation.
Based on the proposal of \citet{PosthocCBMICLR2023}, we can also generate explanations by training a linear model to match the predictions of \mex{} directly using the concept activations of CLIP, which we denote by (Y-CBM). 

{\bf Noisy explanations.} Alongside data inefficiency, concept explanation methods are known to be noisy. We observed critical  concerns with existing CBEs in the same spirit as the challenges raised in \citet{Ramaswamy2022}. As we demonstrate in Sections~\ref{sec:expt:stl},~\ref{sec:expt:real},~\ref{sec:simstudy}, concept explanations for the same \mex{} vary with the choice of the probe-dataset or the concept set bringing into question the reliability of explanations.  

% We now describe our method and present intuitions in relation to the leakage problem. 
\begin{figure*}
    \includegraphics[width=0.96\linewidth]{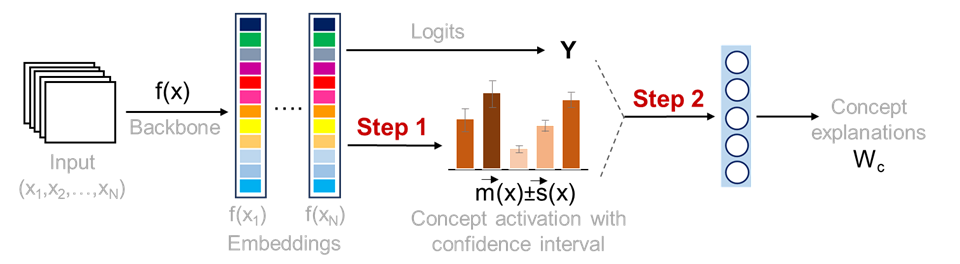}
    \caption{Our proposed estimator: \oursfull{} (\ours{}). We track uncertainty in concept activation scores from Step 1, and model them in Step 2.}
    \label{fig:overview}
\end{figure*}

\section{\oursfull{}}
\label{sec:method}
As summarized in the previous section, CBEs rely on \cs{} for generating explanations. It is not hard to see that the activation score of a concept cannot be predicted confidently if the concept is hard/ambiguous, or if it is not encoded by the \mex{}. 
% Noise in the \cs{}, if not modelled, cascades into the next step: producing poor explanations. 
Moreover, the importance of a concept cannot be confidently estimated if it is missing from the probe-dataset. 
% These uncertainties must be communicated to the user, e.g., through confidence interval on the concept's estimated importance score. Motivated by the role of uncertainty for trustworthy explanations~\citep{bhatt2021uncertainty}, we design our estimator: \ours{}. 
% described below. 
Motivated by the need to model uncertainty in the estimation, we design our estimator: \ours{}.

Our approach encompasses the following steps: (1) estimate \cs{} along with their error interval, and (2) aggregate concept activations and their confidence intervals in to a global concept explanation. We describe the estimation of \cs{} and their error given an instance $\vx$ denoted as $\vec{m}(\vx), \vec{s}(\vx)$ respectively in Section~\ref{sec:method:noise}. By definition, the true concept activation for a concept $k$ and instance $\vx$ is in the range of $\vec{m}(\vx)\pm \vec{s}(\vx)$ with a high probability. \new{We describe the estimation of concept explanations in what follows using $\vec{m}(\vx), \vec{s}(\vx)$, which is independent of how they are computed.} \jh{Can you make it clear? the estimation of concept explanations is independent of how $\vec{m}(\vx), \vec{s}(\vx)$ are computed? What do you mean?}

\new{We compute explanations by fitting a linear regression model on the concept activations in the same spirit as many CBM methods as it is easier to incorporate the input noise in a regression model.} 
Our objective is to learn linear model weights $W_c$ of size $L\times K$ (recall that L, K are the number of labels and concepts respectively) that map the \cs{} to their logit scores, i.e. $f(\vx)\approx W_c\vec{m}(\vx)$. Since the \cs{} contain noise, we require that $W_c$ is such that predictions do not change under noise, that is $W_c[\vec{m}(\vx) + \vec{s}(\vx)]\approx W_c\vec{m}(\vx)\implies W_c\vec{s}(\vx)\approx 0$. I.e. the inner product of each row ($\vec{w}$) of $W_c$ with $\vec{s}(\vx)$ must be negligible. For the sake of exposition, we analyse the solution of y$^{th}\in [1, L]$ row $\vec{w}$ of $W_c$, which can be easily generalized to the other rows. \new{We cast the bounded error constraint, i.e. $|\vec{w}^T\vec{s}(\vx)|\leq \delta$ for some small positive $\delta$ and for all the instances $\vx$ in the probe-dataset, into a distributional prior over the weights as shown below.} 
% The prior over weights can then be easily accommodated in the Bayesian estimation of the posterior on weights.}
% The constraint translates to a neat distributional prior over weights when we approximate the heteroskedastic input noise with its average: $\epsilon=\frac{\sum_{x\in \mathcal{D}}\vec{s(\vx)}}{N}$, as shown below. 

The per-example constraint $|\vec{w}^T\vec{s}(\vx)|\leq \delta$ leads to the following constraint with the average noise. 
$$
  |\vec{w}^T\epsilon| \leq \frac{\sum_{\vx\in \mathcal{D}} |\vec{w}^T\vec{s}(\vx)|}{N} \leq \delta \text{ where }\epsilon \triangleq \frac{\sum_{x\in \mathcal{D}}\vec{s}(\vx)}{N}
$$
If we require that the dot-product with the average noise be bounded, i.e. $|\vec{w}^T\epsilon| \leq \delta, \text{ for some small }\delta>0$ with high probability, their norm, $\vec{w}^T\epsilon\epsilon^T\vec{w}$, must be bounded. If we then approximate $\epsilon\epsilon^T$ with the positive semi-definite $\text{diag}(\epsilon\epsilon^T)$, we can pose the constraint as a distributional prior as below. 
\begin{align*}
  & \vec{w}^T\text{diag}(\epsilon\epsilon^T)\vec{w} \leq \delta^2\\
  &\Rightarrow -\frac{1}{2}(\vec{w}-\mathbf{0})^TS^{-1}(\vec{w}-\mathbf{0}) \text{ where }S^{-1}=\text{diag}(\epsilon\epsilon^T) \\
  &\text{ is high when $\vec{w}$ satisfies the constraint}\\
  & \Rightarrow \mathcal{N}(\vec{w}; \mathbf{0}, \lambda S) \text{ is high for an appropriate $\lambda>0$}\\
  &\Rightarrow \vec{w} \sim \mathcal{N}(\mathbf{0}, \lambda S)
\end{align*}
We observe therefore that the weight vectors drawn from $\mathcal{N}(\mathbf{0}, \lambda \text{diag}(\epsilon\epsilon^T)^{-1})$ satisfy the invariance to input noise constraint with high probability.
We now estimate the posterior on the weights after having observed the data with the prior on weights set to $\mathcal{N}(0, \lambda \text{diag}(\epsilon\epsilon^T)^{-1})$. The posterior over weights has the following closed form\citep{SMLRuss} where $C_X=[\vec{m}(\vx_1), \vec{m}(\vx_2), \dots, \vec{m}(\vx_N)]$ is a $K\times N$ matrix and $Y=[f(\vx_1)[y], f(\vx_2)[y], \dots, f(\vx_N)[y]]^T$ is an $N\times 1$ vector (\new{derivation in Appendix~\ref{sec:appendix:posterior}}). 
\begin{align}
&\Pr(\vec{w}\mid C_X, Y) = \mathcal{N}(\vec{w}; \mu, \Sigma)\label{eq:posterior}\\
&\text{ where } \mu = \beta\Sigma C_XY, \quad \Sigma^{-1} = \beta C_XC_X^T + \lambda^{-1} \text{diag}(\epsilon\epsilon^T)\nonumber
\end{align}
$\beta$ is the inverse variance of noise in observations Y. 
% \jh{what is observations here? $Y$?} 
We optimise both $\beta$ and $\lambda$ using MLE on $\mathcal{D}$ (more details in Appendix~\ref{sec:appendix:mle}). We could directly set the inverse of $\beta$ approximately 0 since there is no noise on the observations Y. Instead of setting $\beta$ to an arbitrary large value, we observed better explanations when we allowed the tuning algorithm to find a value of $\beta, \lambda$ to balance the evidence and noise.

% {\bf Sparsifying weights for interpretability.}
% As a dense weight matrix can be hard to interpret, we induce sparsity in $W_c$ by setting all the values below a threshold to zero. We pick the threshold such that the accuracy on train split does not fall by more than $\kappa$, which is a positive hyperparameter.

% In the next section, we present details on how we estimate the noise in \cs{}, thereby completing the description of our estimator

The estimator shown in Equation~\ref{eq:posterior} is how we model noise in \ours{}. In the next section, we describe how we may estimate the noise in the \cs{}. Algorithm~\ref{alg:ours} summarizes our proposal.
% Section~\ref{sec:method:analysis} presents theoretical analysis and some empirical results on how \ours{} can mitigate information leakage. 

\subsection{Estimation of  \cs{} and their noise}
\label{sec:method:noise}
\new{In this section, we discuss how we estimate $\vec{m}(\vx), \vec{s}(\vx)$ using a pretrained multimodal model.}
Recall that image-text multimodal (MM) systems such as CLIP~\citep{clip} can embed both images and text in a shared representation space, which enables one to estimate the similarity of an image to any phrase. This presents us with the flexibility to specify a concept using its text description ($T_k$ for the $k^{th}$ concept) without needing concept datasets $\mathcal{D}_c^k$. We denote by $g(\bullet)$ the image embedding function of MM and $g_{text}(\bullet)$ the text embedding function. 
% We leave the exploration of other forms of multimodal encoding of concepts, e.g., for biological sequences, to future work.  
% We may compute a concept activation score of an instance $\vx$ for a concept k by simply computing the inner product $g(\vx)^Tg_{text}(T_k)$. We require, however, to estimate \cs{} using the \mex{}. 

\new{Our objective is to estimate $\vec{m}(\vx), \vec{s}(\vx)$ such that the true concept activation value is in the range $\vec{m}(\vx)\pm \vec{s}(\vx)$. Two major sources of uncertainty in concept activations that must inform $s(\vx)$ are due to (1) {\it epistemic (model) uncertainty} arising from lack of information about the concept in the representation layer of the \mex{}, (2) {\it data uncertainty} arising from ambiguity (because the concept is not clearly visible, see Figures~\ref{fig:amb:1}, \ref{fig:amb:2}, \ref{fig:amb:3}, \ref{fig:amb:4} of Appendix~\ref{sec:appendix:uncert} for some examples). We wish to estimate $\vec{s}(\vx)$ that is aware of both the forms of uncertainty.} 
% \jh{I like this!}

\new{We can obtain a point estimate for the activation vector of the $k^{th}$ concept $v_k$ such that $f(\vx)^Tv_k\approx g(\vx)^Tw_k$ (where $w_k=g_{text}(T_k)$) for all $\vx$ in the probe-dataset $\mathcal{D}$ through simple optimization~\citep{Oikarinen2023,Moayeri23}. We may then simply repeat the estimation procedure multiple times to sample from the distribution of activation vectors and their corresponding \cs{}. However, as shown empirically in Appendix~\ref{sec:appendix:uncert}, $\vec{s}(\vx)$ estimated from random sampling is a poor measure of uncertainty, which is unsurprising for high dimensional spaces.}
% We further observed that learning to predict $\vec{s}(\vx)$ from $\mathcal{D}$ (in the spirit of \citet{KimJung23}) is also a poor measure of uncertainty (Appendix~\ref{sec:appendix:uncert}).}

\new{We instead derive a closed form for $\vec{m}(\vx), \vec{s}(\vx)$ based on the following intuition. The \cs{} estimated using $\csim(f(\vx), v_k)$ must intuitively be in the ballpark of $cos(\theta_k)=\csim(g(\vx), w_k)$ where $\csim$ is the cosine similarity~\citep{wiki:Cosine_similarity} (we switched from dot-products to $\csim$ to avoid differences due to magnitude of the vectors). However, if the concept $k$ is not encoded in $f(\vx)$ or if it is ambiguous, the \cs{} are expected to deviate by an angle $\alpha_k$, which is an error measure specific to the concept. Therefore, we expect the \cs{} to be in the range of $cos(\theta_k\pm \alpha_k)$. The concept specific value $\alpha_k$ must account for  uncertainty due to lack of knowledge (for eg. irrelevant concept) and due to ambiguity (for eg. concept is not clearly visible). In what follows, we present a specific measure for $\alpha_k$ and the closed form solution for $\vec{m}(\vx), \vec{s}(\vx)$. 
}
% We can do so if we can find a vector in the embedding space of $f$ corresponding to $g_{text}(T_k)$.  
% We turn to the method proposed in \citet{Oikarinen2023} to register representation spaces: $f$ and $g$. Their procedure is summarised below, where we wish to optimise for a weight vector $v_{k}$ in the representation space of $f$ corresponding to $w_{k}=g_{text}(T_k)$ in $g$.

% % $cos(\theta_i)\triangleq g(c)^Tg(\vx_i)$

% We may repeat the estimation procedure and set $\alpha_k$ to sample mean for a better estimate. The mean \cs{} and their confidence interval can now be estimated using $cos(\alpha_k)$ as given by the following result, proof in Appendix~\ref{sec:appendix:proof1}. 
\new{Borrowing from \citet{Oikarinen2023}, we define $cos(\alpha_k)$ \newline as $\max_v [\csim(e(v, f, \mathcal{D}), e(w_k, g, \mathcal{D}))]$ where $e(w_k, g, \mathcal{D})\triangleq [w_k^Tg(\vx_1), \dots, w_k^Tg(\vx_N)]^T$, and \newline $e(v, f, \mathcal{D}) \triangleq [v^Tf^{[-1]}(\vx_1),\dots, v^Tf^{[-1]}(\vx_N)]^T$.\newline We may just as well adopt any other measure for $\alpha_k$.}
\begin{prop}
    % $e(v, f, \mathcal{D}) \triangleq [v^Tf(\vx_1), v^Tf(\vx_2),\dots, v^Tf(\vx_N)]^T$, $e(w_k, g, \mathcal{D}) = [w_{k}^Tg(\vx_1),\dots, w_{k}^Tg(\vx_N)]^T$\\
    For a concept k and $\alpha_k$ defined as above, we have 
    % when concept activations in $f$ for an instance $\vx$ are computed as $\csim(f(\vx), v_k)$ instead of $v_k^Tf(\vx)$. 
    $$\vec{m}(\vx)_k=cos(\theta_k)cos(\alpha_k), \quad \vec{s}(\vx)_k=sin(\theta_k)sin(\alpha_k)$$
    where cos($\theta_k$)=$\csim(g_{text}(T_k), g(\vx))$ and $\vec{m}(\vx)_k, \vec{s}(\vx)_k$ denote the $k^{th}$ element of the vector.
    \label{prop:1}
\end{prop}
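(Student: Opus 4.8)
The plan is to derive the statement from a single angular triangle inequality followed by the product-to-sum trigonometric identities. The quantity whose range we want is the true concept activation $\csim(f^{[-1]}(\vx), v_k)$. By construction, $v_k$ is obtained (following \citet{Oikarinen2023}) so that the vector of $f$-side scores $e(v_k, f, \mathcal{D})$ is aligned with the vector of $g$-side scores $e(w_k, g, \mathcal{D})$ up to cosine similarity $\cos\alpha_k$; that is, $\alpha_k$ is precisely the angular discrepancy between ``reading the concept off $f$'' and ``reading it off $g$''. Since $\csim(g(\vx), w_k) = \cos\theta_k$ is the angle between the image embedding and the concept text embedding, the angle between the $f$-side reading of concept $k$ and the image is within $\alpha_k$ of $\theta_k$, so the true activation lies in $\{\cos\varphi : \varphi \in [\theta_k - \alpha_k,\, \theta_k + \alpha_k]\}$.

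It then remains to rewrite this set of attainable values as a centre-plus-radius interval. Because cosine similarities lie in $[-1,1]$, all the relevant angles live in $[0,\pi]$, where $\cos$ is monotone decreasing; hence the attainable activations fill exactly the interval $[\cos(\theta_k+\alpha_k),\, \cos(\theta_k-\alpha_k)]$ (clamping an endpoint back to $[0,\pi]$ if needed only shrinks it). Its midpoint is $\tfrac12\big(\cos(\theta_k-\alpha_k)+\cos(\theta_k+\alpha_k)\big)$ and its half-width is $\tfrac12\big(\cos(\theta_k-\alpha_k)-\cos(\theta_k+\alpha_k)\big)$. Applying $\cos(\theta-\alpha)+\cos(\theta+\alpha)=2\cos\theta\cos\alpha$ and $\cos(\theta-\alpha)-\cos(\theta+\alpha)=2\sin\theta\sin\alpha$ gives $\vec{m}(\vx)_k = \cos\theta_k\cos\alpha_k$ and $\vec{s}(\vx)_k = \sin\theta_k\sin\alpha_k$. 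I would close by observing that $\sin\theta_k, \sin\alpha_k \ge 0$ on $[0,\pi]$, so $\vec{s}(\vx)_k$ is a genuine non-negative error radius, and that by the triangle inequality above $\vec{m}(\vx)_k \pm \vec{s}(\vx)_k$ brackets the true activation with high probability.

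The only real obstacle is the angular triangle-inequality step: justifying that $\alpha_k$ is a valid angular error bound linking the $f$-side and $g$-side concept readings. This is a modeling assumption rather than a theorem --- it presumes the optimization defining $v_k$ attains the optimal alignment $\cos\alpha_k$ and that the aggregate alignment over $\mathcal{D}$ transfers to a per-example angular bound. I would therefore keep that step at the informal level of the surrounding ``intuition'' discussion and devote the written proof to making the interval-and-identity manipulation fully explicit, since that is where the closed forms in the statement actually come from.
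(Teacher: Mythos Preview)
Your proposal is correct and the trigonometric half of the argument is exactly what the paper does: once the true activation is pinned down as $\cos(\theta_k\pm\alpha_k)$, the paper also expands via $\cos(\theta\pm\alpha)=\cos\theta\cos\alpha\mp\sin\theta\sin\alpha$ to read off the centre and radius. The difference is in the step you flag as a modeling assumption. The paper does not leave it entirely informal: it posits a unitary linear map between the $f$- and $g$-representation spaces so that $v_k$ has a counterpart $w$ in $g$-space with $\csim(f(\vx),v_k)=\csim(g(\vx),w)$, then writes the embedding matrix as $G=AU$ and assumes the probe-dataset is diverse enough that $A^TA\propto I$; these two assumptions let it conclude that the aggregate alignment $\cos\alpha_k$ over $\mathcal{D}$ is literally the angle between $w$ and $w_k$ in the relevant subspace, from which the per-example range $\theta_k\pm\alpha_k$ follows. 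Your route is shorter and arguably more honest about where the approximation lives, but the paper's version has the advantage of naming the structural assumptions (unitary correspondence, isotropic data) under which the aggregate-to-per-example transfer is exact, so a reader can see what would need to hold for the bound to be tight rather than merely heuristic.
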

% If the \mex{} uses a concept well  Instead of learning concept-to-label mapping using potentially noisy \cs{} computed using $f(\vx)^T\hat{v}_c$, we model \cs{} with error bounds around $g(\vx)^Tv_c$. Since estimation error is $cos(\alpha)$, \cs{} is in the interval $cos(arccos(g(\vx)^Tv_c)\pm \alpha)$. 
The proof can be found in Appendix~\ref{sec:appendix:proof1}. The mean and scale values above have a clean interpretation. If the \mex{} ($f$) uses the $k^{th}$ concept for label prediction, the information about the concept is encoded in $f$ and we get a good fit, i.e. $cos(\alpha_k)\approx 1$, and a small error on \cs{}. On the other hand, error bounds are large and \cs{} are suppressed when the fit is poor, i.e. $cos(\alpha_k)\approx 0$. 

Although $s(\vx)$ given by Proposition~\ref{prop:1} is simple, we found it to be surprisingly effective, which are presented in Appendix~\ref{sec:appendix:uncert}. The appendix section contains the following details. (1) We introduced two other variants for measuring uncertainty: Monte Carlo and a distribution fit method (based on~\citep{KimJung23}). (2) We evaluated different methods for quantifying the different sources of uncertainty on a real dataset. Across different sources of uncertainty and uncertainty estimation methods, we found our method to be consistently effective.  
% In Appendix~\ref{sec:appendix:uncert}, we contrasted different methods for estimation of $\vec{s}(\vx)$. We observed through empirical evaluation that our estimate of $\vec{s}(\vx)$ above modeled both model and data uncertainty well.
% \jh{, compared to MC Sampling and Distribution Fit.}\VP{Since we haven't mentioned what Distribution Fit and MC sampling are, they might confuse the reader.}

% Let $Q_1, Q_2$ be the subspace spanned by X in f and g respectively. We have $\hat{v}_c=Q_1v + Q_1^\perp v$ and $v_c=Q_2v_c+Q^\perp_2v_c$. Let $\csim(e(\hat{v}_c, f, X), e(v_c, g, X))=cos(\alpha)\implies \csim(Q_1\hat{v}_c, Q_2v_c)=cos(\alpha)$. 
% For any arbitrary point $\vx$ in X, $f(\vx)^T\hat{v}_c=f(\vx)^T[Q_1v + Q_1^\perp v]=f(\vx)^TQ_1$ 

\subsection{Theoretical motivation}
\label{sec:method:analysis}
% \paragraph{Theoretical motivation.}
We now demonstrate theoretically the pitfalls of using a standard method for estimating concept explanations. For ease of analysis, we focus on robustness to misspecified concept sets. In our study, we compared explanations generated using a standard linear estimator (using Ordinary Least Squares~\citep{wiki:OLS}) and \ours{}. Recall that Posthoc-CBMs (O-CBM, Y-CBM), which are our primary focus for comparison, both estimate explanations by fitting a linear model on \cs{}. 

We present two scenarios with noisy \cs{}. 
In the first scenario (over-complete concept set), we analysed the estimation when the concept set contains many irrelevant concepts. We show that the likelihood of marking an irrelevant concept as more important than a relevant concept increases rapidly with the number of concepts when the explanations are estimated using a standard linear estimator, which is presented in Corollary~\ref{cor:prop2}. In the same result, we also demonstrated that \ours{} do not suffer the same problem.  In the second scenario (under-complete concept set), we analysed the explanations when the concept set only includes irrelevant concepts, wherein both concepts should be assigned importance at or near zero. We again show in Proposition~\ref{prop:3} that a standard linear model attributes a significantly non-zero score while \ours{} mitigates the issue. 
%which should both be assigned a zero score ideally. 
% In Section~\ref{sec:simstudy}, we confirm our theoretical findings with an empirical evaluation. 

{\bf Setting 1: Unreliable explanations due to over-complete concept set}. 
We analyse a simple setting where the output (y) is linearly predicted from the input ($\vx$) as $y=\vw^T\vx$. We wish to estimate the importance of some K concepts by fitting  a linear estimator on \cs{}. Where \cs{} are computed as $\vw_k^T\vx$ using concept activation vectors ($\vw_k$) that are distributed as $\vw_k\sim \mathcal{N}(\vu_k, \sigma_k^2I), k\in [1, K]$.
\begin{prop}
  The concept importance estimated by \ours{} when the input dimension is sufficiently large and for a regularizing hyperparameter $\lambda>0$ is approximately given by $v_k=\frac{\vu_k^T\vw}{\vu_i^T\vu_k + \lambda\sigma_k^2}$. On the other hand, the importance scores estimated using a standard estimator under the same conditions is distributed as $v_k\sim \mathcal{N}(\frac{\vu_k^T\vw}{\vu_k^T\vu_k}, \sigma_k^2\frac{\|\vw\|^2}{\|\vu_k\|^2})$.
  \label{prop:2}
\end{prop}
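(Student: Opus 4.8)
The plan is to reduce both estimators to closed-form, per-concept expressions by using two consequences of the input dimension $d$ (and the probe size $N$) being large: near-isotropic probe inputs give $\frac{1}{N}XX^T\approx I_d$, and generic concept-mean directions $\vu_1,\dots,\vu_K$ are nearly orthogonal, so Gram matrices built from them are nearly diagonal. Write $X=[\vx^{(1)},\dots,\vx^{(N)}]$ ($d\times N$), $U=[\vu_1,\dots,\vu_K]$ ($d\times K$), target $Y=X^T\vw$, and decompose each sampled concept vector as $\vw_k=\vu_k+\sigma_k\mathbf{g}_k$ with $\mathbf{g}_k\sim\mathcal{N}(0,I_d)$. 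In this setting \ours{} operates on the mean activations $\vec{m}(\vx)_k=\vu_k^T\vx$ and their errors $\vec{s}(\vx)_k=\sigma_k\|\vx\|$ (the conditional mean and standard deviation of the noisy score $\vw_k^T\vx$ over the draw of $\vw_k$), whereas the standard OLS estimator is forced to use a single noisy draw, i.e.\ the matrix $C$ with entries $\vw_k^T\vx^{(i)}$.

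For \ours{}, I substitute these quantities into Equation~\ref{eq:posterior}. Since $C_X=U^TX$, we get $C_XC_X^T=U^T(XX^T)U\approx N\,U^TU$ and $C_XY=U^T(XX^T)\vw\approx N\,U^T\vw$, while $\epsilon=\frac{1}{N}\sum_{\vx}\vec{s}(\vx)$ gives $\text{diag}(\epsilon\epsilon^T)=c\cdot\text{diag}(\sigma_1^2,\dots,\sigma_K^2)$ for the positive constant $c=(\frac{1}{N}\sum_{\vx}\|\vx\|)^2$. Near-orthogonality makes $U^TU\approx\text{diag}(\|\vu_1\|^2,\dots,\|\vu_K\|^2)$, so $\Sigma^{-1}=\beta C_XC_X^T+\lambda^{-1}\text{diag}(\epsilon\epsilon^T)$ is approximately diagonal and the $k$-th coordinate of $\mu=\beta\Sigma C_XY$ separates as $\mu_k\approx \frac{\beta N\,\vu_k^T\vw}{\beta N\|\vu_k\|^2+\lambda^{-1}c\,\sigma_k^2}$. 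Absorbing the fixed scalars $\beta,N,c$ into a single regularizing hyperparameter $\lambda>0$ (the MLE-tuned $\beta$ stays finite, so the $\sigma_k^2$ term does not disappear) gives $v_k=\frac{\vu_k^T\vw}{\|\vu_k\|^2+\lambda\sigma_k^2}$, which is deterministic because it never sees the noisy draw $C$.

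For the standard estimator $\hat{v}=(C^TC)^{-1}C^TY$, the same two facts make $C^TC\approx N\,\text{diag}(\|\vw_1\|^2,\dots,\|\vw_K\|^2)$, so the regression decouples into the univariate ratios $\hat{v}_k\approx\frac{C_k^TY}{C_k^TC_k}=\frac{\vw_k^T(XX^T)\vw}{\vw_k^T(XX^T)\vw_k}\approx\frac{\vw_k^T\vw}{\|\vw_k\|^2}$. Treating $\sigma_k\mathbf{g}_k$ as a small perturbation, a first-order (delta-method) expansion of $\vw_k\mapsto\vw_k^T\vw/\|\vw_k\|^2$ about $\vu_k$ gives $\hat{v}_k\approx\frac{\vu_k^T\vw}{\|\vu_k\|^2}+\sigma_k\,\mathbf{a}^T\mathbf{g}_k$ with $\mathbf{a}=\frac{\vw}{\|\vu_k\|^2}-\frac{2(\vu_k^T\vw)\vu_k}{\|\vu_k\|^4}$; since $\mathbf{g}_k\sim\mathcal{N}(0,I_d)$ this makes $\hat{v}_k$ approximately Gaussian with mean $\frac{\vu_k^T\vw}{\|\vu_k\|^2}$ and variance $\sigma_k^2\|\mathbf{a}\|^2$, and expanding $\|\mathbf{a}\|^2$ the two cross terms cancel to leave $\sigma_k^2\,\|\vw\|^2/\|\vu_k\|^2$ up to the normalization of the concept means. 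The contrast is then immediate: the \ours{} score is a shrinkage estimate, damped exactly when the concept is irrelevant ($\vu_k^T\vw\approx0$) or noisy ($\sigma_k$ large), whereas the OLS score, though unbiased, has variance growing with $\sigma_k$, so a noisy irrelevant concept can outrank a clean relevant one with non-negligible probability.

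The main obstacle is making precise what ``large $d$'' buys. Near-orthogonality of the concept means — needed both to diagonalize $U^TU$ for \ours{} and to decouple the OLS regression per concept — requires either an explicit random-direction/genericity assumption on $\vu_1,\dots,\vu_K$ or a quantitative bound showing the off-diagonal Gram entries are $o(1)$ relative to the diagonal; similarly $\frac{1}{N}XX^T\approx I_d$ needs a (near-)isotropy assumption on the probe inputs together with a concentration step. Relatedly, the reduction $\|\vw_k\|^2\approx\|\vu_k\|^2$ silently drops the noise contribution $\sigma_k^2\|\mathbf{g}_k\|^2\approx\sigma_k^2 d$, so the write-up should either restrict to the regime $\sigma_k^2 d\ll\|\vu_k\|^2$ or state explicitly that the denominator is kept only to leading order in $\sigma_k$ — and this is also where the exact power of $\|\vu_k\|$ in the OLS variance is pinned down. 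Once those regimes are fixed, the remainder — substituting into Equation~\ref{eq:posterior}, the delta-method expansion, and the cancellation in $\|\mathbf{a}\|^2$ — is routine algebra.
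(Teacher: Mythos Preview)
Your approach and the paper's share the same core mechanism: use near-orthogonality of random directions in high dimension to decouple the multivariate regression into independent per-concept scalar problems, after which both results follow by direct computation. The executions differ, though. For the \ours{} half, the paper does not substitute into Equation~\ref{eq:posterior} as you do; instead it posits a surrogate objective $\|\vw-\sum_k v_k\vu_k\|^2+\lambda\sum_k\sigma_k^2 v_k^2$ that it argues ``captures the essence'' of the Bayesian estimator, and minimizes that directly. Your route through the actual posterior mean is more faithful to the method and makes explicit the isotropy step $\frac{1}{N}XX^T\approx I_d$ needed to pass from activation space to weight space (the paper makes this reduction implicitly, jumping to $\vw\approx\sum_k v_k\vw_k$ without comment). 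For the standard estimator, the paper simply freezes the denominator at $\|\vu_k\|^2$ and treats only the numerator $\vw_k^T\vw$ as Gaussian; your delta-method expansion is more careful and in fact yields variance $\sigma_k^2\|\vw\|^2/\|\vu_k\|^4$ rather than the paper's $\sigma_k^2\|\vw\|^2/\|\vu_k\|^2$ --- you were right to hedge this as a normalization issue, and the discrepancy is immaterial once $\|\vu_k\|=1$ as in the corollary. Your explicit discussion of the regimes required (genericity of the $\vu_k$, the condition $\sigma_k^2 d\ll\|\vu_k\|^2$) goes beyond what the paper supplies.
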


Proof of the result can be found in Appendix~\ref{sec:appendix:proof2}. \new{Based on the result, we can deduce the following for a specific case of $\vu_k$s and $\sigma_k$s.   
\begin{cor}
    For the data setup of Proposition~\ref{prop:2}, the following results holds when $\vu_1=\mathbf{w}, \sigma_1\approx 0$ and $\vu_k^T\mathbf{w}=0,\quad \forall k \in [2, K]$. Then the probability that the standard estimator returns the first concept as the most salient decreases exponentially with the number of concepts. On the other hand, the importance score assigned by \ours{} is 1 for the only relevant first concept and 0 otherwise. 
    \label{cor:prop2}
\end{cor}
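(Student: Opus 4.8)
The plan is to specialise the two formulas of Proposition~\ref{prop:2} to the stated choice of $\vu_k$ and $\sigma_k$, and then convert the resulting marginal distributions into a statement about which concept attains the largest importance score.

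First I would substitute $\vu_1=\vw$, $\sigma_1\approx 0$ and $\vu_k^T\vw=0$ for $k\geq 2$ into the \ours{} estimate $v_k=\vu_k^T\vw/(\vu_k^T\vu_k+\lambda\sigma_k^2)$. For $k=1$ the denominator reduces to $\vw^T\vw$ (the $\lambda\sigma_1^2$ term vanishes) and the numerator is $\vw^T\vw$, giving $v_1=1$; for every $k\geq 2$ the numerator is $0$, so $v_k=0$. Hence \ours{} deterministically outputs $(1,0,\dots,0)$, which is the second half of the claim. For the standard (OLS) estimator, Proposition~\ref{prop:2} gives $v_1\sim\mathcal{N}(1,\sigma_1^2)$, i.e. $v_1=1$ almost surely once $\sigma_1\approx 0$ (note $\|\vu_1\|=\|\vw\|$), and $v_k\sim\mathcal{N}\bigl(0,\,\tau_k^2\bigr)$ for $k\geq 2$ with $\tau_k^2=\sigma_k^2\|\vw\|^2/\|\vu_k\|^2>0$.

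Next I would compute the probability that the first concept is ranked most important. The relevant event is $\{\,v_1\geq v_k\text{ for all }k\geq 2\,\}$ (and the same argument with $|v_k|$ in place of $v_k$ covers the ``most salient in magnitude'' reading). Because the concept activation vectors $\vw_k\sim\mathcal{N}(\vu_k,\sigma_k^2 I)$ are drawn independently, the estimates $v_2,\dots,v_K$ are mutually independent in the large--input--dimension regime of Proposition~\ref{prop:2}: the Gram matrix of the $\vw_k$ concentrates onto its diagonal, so the least--squares fit decouples coordinate by coordinate. Using $v_1=1$, the probability therefore factorises as $\prod_{k=2}^{K}\Pr(v_k\leq 1)=\prod_{k=2}^{K}\Phi(1/\tau_k)$, where $\Phi$ is the standard normal CDF.

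Finally I would bound this product. Let $\tau^\star=\max_{k\geq 2}\tau_k$; then $\Phi(1/\tau_k)\leq\Phi(1/\tau^\star)=:\rho$, and $\rho<1$ since $\tau^\star$ is finite and positive. Hence the probability that the standard estimator returns the first concept as the most important is at most $\rho^{K-1}=\exp\!\bigl(-(K-1)\log(1/\rho)\bigr)$, which decays exponentially in $K$, while \ours{} always returns exactly $(1,0,\dots,0)$; this completes the proof. The main obstacle I anticipate is the decoupling step: one must make precise in what sense ``sufficiently large input dimension'' renders the off--diagonal Gram terms negligible, so that the joint law of $(v_2,\dots,v_K)$ is, to leading order, the product of the Proposition~\ref{prop:2} marginals. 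A secondary point worth stating explicitly is that both natural notions of ``most salient'' give the same conclusion, since $\Pr(|v_k|\leq 1)=2\Phi(1/\tau_k)-1$ is also bounded away from $1$.
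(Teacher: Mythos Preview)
Your argument mirrors the paper's proof: substitute the special values into the two formulas of Proposition~\ref{prop:2}, observe that $v_1=1$ deterministically while $v_k\sim\mathcal{N}(0,\tau_k^2)$ for $k\geq 2$, and factor $\Pr(v_k\leq 1\text{ for all }k\geq 2)$ as $\prod_{k=2}^{K}\Phi(1/\tau_k)$, a product of $K-1$ values strictly below $1$.

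There is one slip in your final bound. With $\tau^\star=\max_{k\geq 2}\tau_k$ you have $\tau_k\leq\tau^\star$, hence $1/\tau_k\geq 1/\tau^\star$, and since $\Phi$ is increasing this gives $\Phi(1/\tau_k)\geq\Phi(1/\tau^\star)$, the reverse of what you wrote. The fix is to take $\tau_\star=\min_{k\geq 2}\tau_k>0$ instead; then $\Phi(1/\tau_k)\leq\Phi(1/\tau_\star)=:\rho<1$ and the product is bounded by $\rho^{K-1}$ as you claim. (The paper's own argument does not supply this uniform bound at all---it simply observes that a product of $K-1$ factors in $(0,1)$ shrinks with $K$---so once this sign is corrected your version is actually more careful, including your remarks on independence in the high-dimensional regime and on the $|v_k|$ reading of ``most salient''.)
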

Derivation of the result can be found in Appendix~\ref{sec:cor:2}.} We observe from the result that the standard estimator will more likely flag an irrelevant concept as relevant, which is addressed when using \ours{}. 
% therefore that the probability of a random concept \jh{($u_2, u_3, ..., u_K$)} being estimated as more important than the relevant concept \jh{($u_1$)} quickly converges to 1 with the number of random concepts K-1 when the distribution or uncertainty is not modeled. 
Section~\ref{sec:expt:stl} confirms our theoretical observation in practice.  
% If we consider a setting where only the first of the K random concepts is relevant and the rest random, i.e. $\vu_1=\vw, \sigma_1\approx 0$ and $\vu_k \text{ such that } \vu_k^T\vw\approx 0\quad \forall k\in [2, K]$. 
% In this setting, \ours{} estimated importance scores is 1 for the relevant concept and 0 for the rest, while the importance scores estimated by the OLS linear regression model are normally distributed with means at 1 for the relevant concept and 0 for the irrelevant concepts. However, due to variance of importance scores estimated by the standard model, the probability that at least of the K-1 random concepts is estimated to be more important than the relevant concept is $1-\prod_{k=2}^K\Phi(\frac{\|u_k\|}{\sigma_k\|w\|})$, where $\Phi$ is the CDF of standard normal. We observe therefore that the probability of a random concept being estimated as more important than the relevant concept quickly converges to 1 with the number of random concepts: K-1. 

{\bf Setting 2: Unreliable explanations due to under-complete concept sets}.
We now analyse explanations when the concept set only includes two irrelevant concepts. Like in the previous setting, the $k^{th}$ concept activation $c_k^{(i)}$ is computed as $\vw_k^T\vx^{(i)}$. Consider two orthogonal unit vectors $\vu, \vv$. We define the concept activations for the two concepts $c_1^{(i)}, c_2^{(i)}$ for the $i^{th}$ instance $\vx^{(i)}$ and label $y^{(i)}$ as below.
\begin{align*}
y^{(i)} = \vu^T\vx^{(i)}, \quad &c_{1}^{(1)}=(\beta_1\vu + (1-\beta_1)\vv)^T\vx^{(i)}\\ 
&c_{2}^{(i)} = (\beta_2\vu + (1-\beta_2)\vv)^T\vx^{(i)}\\
\text{where } & \beta_1 \sim \mathcal{N}(b_1, \sigma^2), \beta_2 \sim \mathcal{N}(b_2, \sigma^2)
\end{align*}
If $b_1, b_2, \sigma^2$ are very small, then both concepts are expected to be unimportant for label prediction due to small overlap with $\vu$. However, we can see with some effort (Appendix~\ref{sec:appendix:proof3}) that the importance scores computed by a standard estimator are   $\frac{1-\beta_2}{\beta_1-\beta_2}, \frac{1-\beta_1}{\beta_1 - \beta_2}$, which are large because $\beta_1\approx 0, \beta_2\approx 0 \therefore \beta_1-\beta_2\approx 0$. We will now show that \ours{} estimates near-zero importance scores as expected. 
% The result is confirmed by Figure~\ref{fig:4-color-motive} table where we observe non-trivial classification accuracy and large regression weights. The concept importance scores estimated by \ours{} are as shown below where f is defined as $x^{(i)}i=u^Tz^{(i)}=f(z^{(i)})$. 
\begin{prop}
  The importance score estimated by \ours{} is approximately $\frac{b_1/2}{1+\lambda\sigma^2}, \frac{b_2/2}{1+ \lambda\sigma^2}$, where $\lambda>0$ is a regularizing hyperparameter. 
\label{prop:3}
\end{prop}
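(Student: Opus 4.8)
The plan is to specialise the closed-form posterior of Equation~\eqref{eq:posterior} (with $K=2$, $L=1$) to this data model; the quantity \ours{} reports is exactly the posterior mean $\mu=\beta\Sigma C_XY$. Write $a^{(i)}=\vu^T\vx^{(i)}$ and $d^{(i)}=\vv^T\vx^{(i)}$. Then the mean activation of concept $k$ on example $i$ is $\vec{m}(\vx^{(i)})_k=b_k a^{(i)}+(1-b_k)d^{(i)}$, and because the only randomness in the activation is $\beta_k\sim\mathcal{N}(b_k,\sigma^2)$ acting along the direction $\vu-\vv$, its error scale is $\vec{s}(\vx^{(i)})_k=\sigma\,|a^{(i)}-d^{(i)}|$, which notably does not depend on $k$.

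First I would compute the three aggregates appearing in Equation~\eqref{eq:posterior}. Using $\vu\perp\vv$ together with the large-$N$ (equivalently high input dimension) concentration already invoked in Proposition~\ref{prop:2} --- so that $\tfrac1N\sum_i(a^{(i)})^2,\ \tfrac1N\sum_i(d^{(i)})^2\to 1$ and $\tfrac1N\sum_i a^{(i)}d^{(i)}\to 0$ --- one obtains $\tfrac1N C_XY\to(b_1,b_2)^T$ and
\[
 \tfrac1N C_XC_X^T \;\longrightarrow\; \begin{pmatrix} b_1^2+(1-b_1)^2 & b_1b_2+(1-b_1)(1-b_2)\\[2pt] b_1b_2+(1-b_1)(1-b_2) & b_2^2+(1-b_2)^2 \end{pmatrix},
\]
whereas $\epsilon_k\to\sigma\,\mathbb{E}|a-d|$ for both $k$, so $\text{diag}(\epsilon\epsilon^T)$ is a scalar multiple of the identity, say $\sigma^2\kappa^2 I$. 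Since $b_1,b_2$ are small, the Gram matrix above is close to the rank-one matrix $\bigl(\begin{smallmatrix}1&1\\1&1\end{smallmatrix}\bigr)$: this near-degeneracy is exactly why an ordinary least-squares fit is ill-conditioned and produces the blown-up scores $\tfrac{1-\beta_2}{\beta_1-\beta_2},\tfrac{1-\beta_1}{\beta_1-\beta_2}$ derived in Appendix~\ref{sec:appendix:proof3}, while in \ours{} the extra term $\lambda^{-1}\sigma^2\kappa^2 I$ in $\Sigma^{-1}$ restores invertibility.

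Next I would substitute into $\Sigma^{-1}=\beta C_XC_X^T+\lambda^{-1}\text{diag}(\epsilon\epsilon^T)$ and $\mu=\beta\Sigma C_XY$. Pulling the common $\beta N$ out of the precision, the posterior mean is $\mu\approx(M_0+\lambda'\sigma^2 I)^{-1}(b_1,b_2)^T$, where $M_0\approx\bigl(\begin{smallmatrix}1&1\\1&1\end{smallmatrix}\bigr)$ to leading order in the $b_k$ and the single free constant $\lambda'$ absorbs $\beta$, $N$ and $\kappa^2$ (a harmless rescaling of the hyperparameter, as in Propositions~\ref{prop:1}--\ref{prop:2}). Inverting the $2\times2$ matrix $\bigl(\begin{smallmatrix}1+\lambda'\sigma^2 & 1\\ 1 & 1+\lambda'\sigma^2\end{smallmatrix}\bigr)$, whose determinant is $\lambda'\sigma^2(\lambda'\sigma^2+2)$, gives $\mu_k=\dfrac{(1+\lambda'\sigma^2)b_k-b_{3-k}}{\lambda'\sigma^2(\lambda'\sigma^2+2)}$. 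Rewriting the numerator as $\lambda'\sigma^2 b_k+(b_k-b_{3-k})$ and discarding the difference $b_k-b_{3-k}$ as higher order than $\lambda'\sigma^2 b_k$, the numerator is $\approx\lambda'\sigma^2 b_k$, hence $\mu_k\approx\dfrac{b_k}{\lambda'\sigma^2+2}=\dfrac{b_k/2}{1+\lambda'\sigma^2/2}$, which is the claimed $\dfrac{b_k/2}{1+\lambda\sigma^2}$ once the factor of two is folded into $\lambda$. In particular both scores are $O(b_k)$, i.e.\ near zero, unlike the least-squares scores that diverge as $\beta_1-\beta_2\to0$.

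The main delicacy is controlling the chain of approximations once we commit to ``$b_1,b_2,\sigma^2$ all very small'': one must fix their relative orders so that (i) empirical second moments may be replaced by their population values, (ii) $M_0$ may be replaced by $\bigl(\begin{smallmatrix}1&1\\1&1\end{smallmatrix}\bigr)$, and, most importantly, (iii) the antisymmetric numerator term $b_k-b_{3-k}$ is genuinely dominated by $\lambda'\sigma^2 b_k$ --- otherwise the near-singularity of $M_0$ leaks back in and, as for least squares, the estimate ceases to be small. I would therefore carry the \emph{exact} $2\times2$ inverse of $M_0+\lambda'\sigma^2 I$ first and only then expand, and I would state the regime (e.g.\ $|b_1-b_2|/\min(b_1,b_2)=o(\lambda\sigma^2)$) under which the stated closed form holds.
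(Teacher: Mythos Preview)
Your derivation is correct and lands on the same $2\times 2$ ridge system the paper solves, but the paper reaches it by a slightly different route. Instead of instantiating the posterior formula~\eqref{eq:posterior} and computing $C_XC_X^T$, $C_XY$, and $\epsilon$ from the data model as you do, the paper (following the device already used for Proposition~\ref{prop:2}) replaces \ours{} by the surrogate ridge objective
\[
\ell=\|\vu-\eta_1(b_1\vu+(1-b_1)\vv)-\eta_2(b_2\vu+(1-b_2)\vv)\|^2+2\lambda\sigma^2(\eta_1^2+\eta_2^2),
\]
arguing heuristically that this captures the ``fit the mean, penalise by variance'' essence of the estimator. Setting the gradient to zero gives exactly your normal equations with the \emph{unreduced} Gram matrix; the paper then solves the $2\times 2$ system in closed form first and only afterwards applies $b_1-b_2\approx 0$ and $b_1,b_2\approx 0$, which automatically handles the delicacy you correctly flag about the $b_k-b_{3-k}$ term in the numerator. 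Your route is more faithful to the method as actually defined and makes explicit which empirical moments must concentrate and under what parameter regime the simplification is valid; the paper's surrogate shortcut is quicker but asks the reader to accept the ridge proxy without derivation.
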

Proof can be found in Appendix~\ref{sec:appendix:proof3}. It follows from the result that the importance scores computed by \ours{} are near-zero for sufficiently large value of $\lambda$.

\section{Experiments}
\label{sec:expt}
\begin{figure*}[htb]
    \begin{minipage}{0.2\textwidth}
        \includegraphics[width=0.96\linewidth]{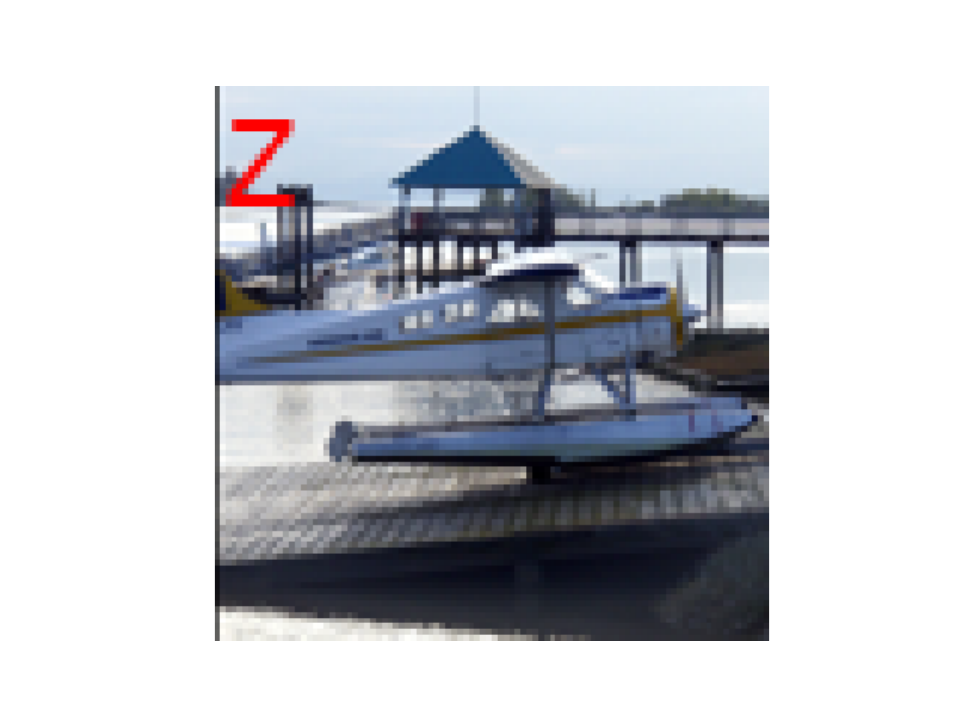}\\
        \includegraphics[width=0.96\linewidth]{media/tag_1.pdf}        
    \end{minipage}\hfill
    \begin{minipage}{0.4\textwidth}
        \includegraphics[width=0.98\linewidth]{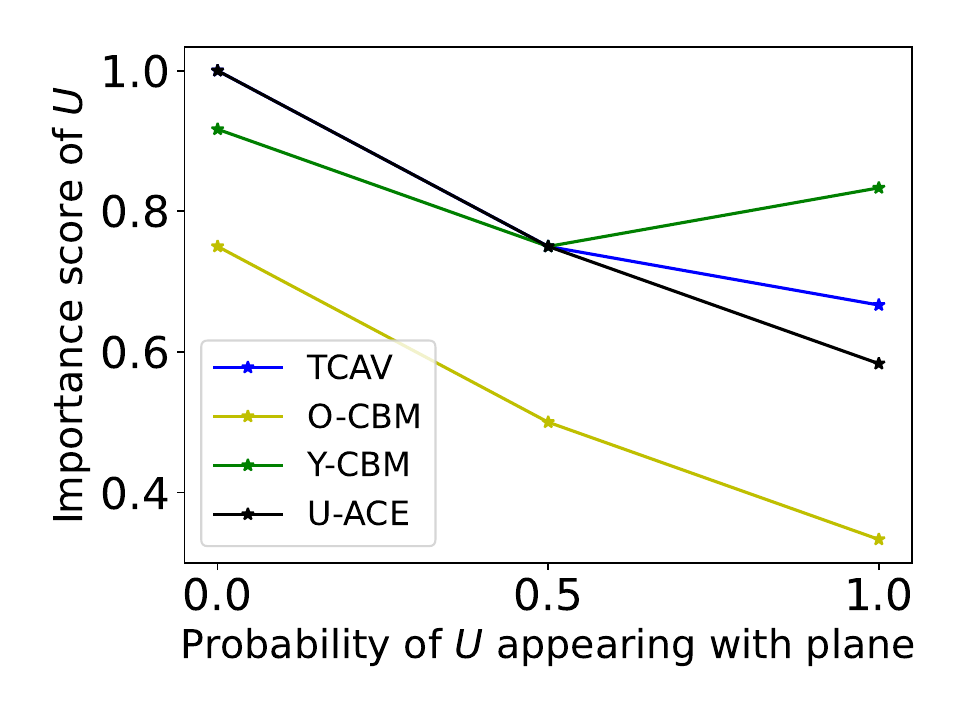}
    \end{minipage}\hfill
    \begin{minipage}{0.4\textwidth}
        \includegraphics[width=0.98\linewidth]{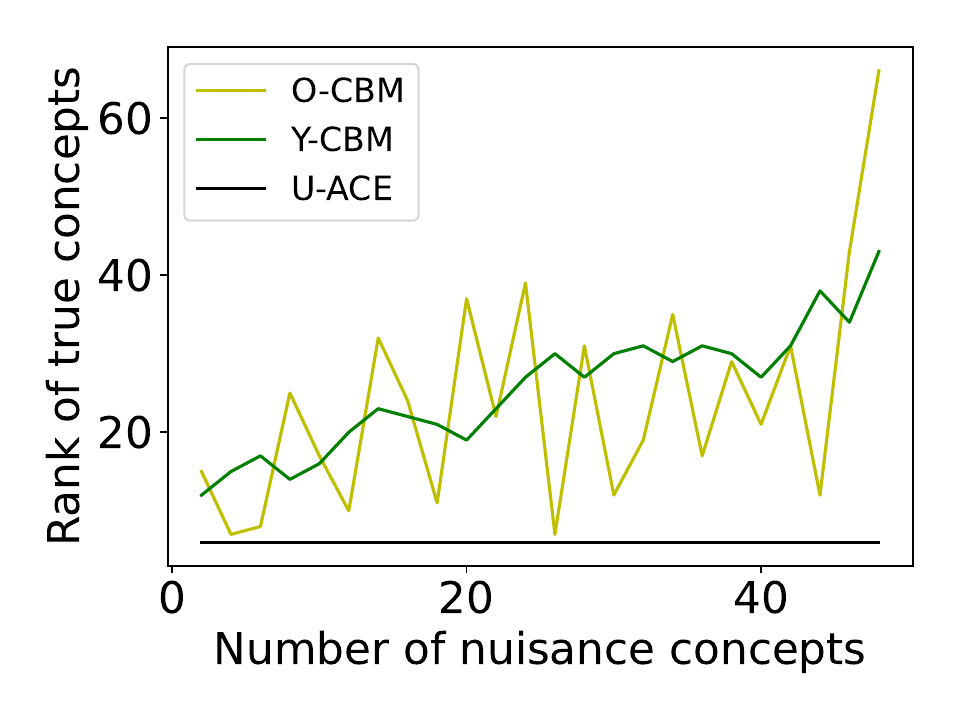}
    \end{minipage}
    \caption{(Left) STL dataset with a spurious tag. (Middle) Importance of a tag concept for three different \mex{}. X-axis shows the probability of tag in the training dataset of \mex{}. (Right) Average rank of true concepts with irrelevant concepts; lower is better.}
    % \caption{On the left is STL dataset with a spurious tag. In the middle is importance of a tag concept for three different \mex{}. X-axis shows the probability of tag in the training dataset of \mex{}. To the right is average rank of true concepts with irrelevant concepts (lower is better).}
    \label{fig:stl}
    % \vspace{-3mm}
\end{figure*}

We now empirically evaluate \ours{} on synthetic and real datasets. 
% We demonstrate how explanation reliability is improved by \ours{} using a controlled study in Section~\ref{sec:simstudy}. 
We make a quantitative assessment with known ground-truth concepts on a controlled dataset in Section~\ref{sec:expt:stl}. In Section~\ref{sec:expt:real}, we evaluate on two challenging real-world datasets with more than 700 concepts. Finally in Section~\ref{expt:simagenet}, we evaluate utility of different methods in recovering known spurious features from more than 2,000 concepts. 
% motivate and evaluate uncertainty on concept importance scores in Section~\ref{sec:expt:a}. We evaluate a simulated over-complete  leakage scenario using STL dataset in Section~\ref{sec:expt:stl}.  Finally, Section~\ref{sec:expt:c} presents results from evaluating on two real-world datasets, which are both part of Broden dataset~\citep{NDissect}. 

\paragraph{Baselines.} \simple{}: Explanations are estimated using lasso regression of ground-truth concept annotations to estimate logit values of $f$. \simple{} was also adopted in the past~\citep{Ramaswamy2022elude, Ramaswamy2022}. Other baselines are introduced in Section~\ref{sec:background}: {\it TCAV}~\citep{TCAV}, {\it O-CBM}~\citep{Oikarinen2023}, {\it Y-CBM} based on~\citep{PosthocCBMICLR2023}.

\paragraph{Standardized comparison between importance scores.} The interpretation of the importance score varies between different estimation methods. For instance, the importance score in TCAV is the fraction of examples that meet certain criteria, but for the rest the importance scores are the weights from linear model that predicts logits. Further, \simple{} operates on binary concept annotations  and {\it O-CBM, Y-CBM, \ours{}} on soft scores. For this reason, we cannot directly compare importance scores or their normalized variants. We instead use negative scores to obtain a ranked list of concepts and assign to each concept an importance score given by its rank in the list normalized by number of concepts. Our sorting algorithm ranks any two concepts with same score by alphabetical order of their text description. In all our comparisons we use the rank score if not mentioned otherwise.  

\paragraph{Other experiment details.} For all our experiments, we employed a Visual Transformer (with 32 patch size called ``ViT-B/32'') based pretrained CLIP model that is publicly available for download at \url{https://github.com/openai/CLIP} as the pretrained multimodal model, which we denoted by $g$. We use $l=-1$, i.e. last layer just before computation of logits for all the explanation methods. \ours{} returns the mean and variance of the importance scores as shown in Algorithm~\ref{alg:ours}, we use mean divided by standard deviation as the importance score estimated by \ours{} everywhere for comparison with other methods. 

\section{Assessment with known ground-truth}
\label{sec:expt:stl}
We now seek to establish that \ours{} generates faithful concept explanations. Subscribing to the common evaluation practice~\citep{TCAV}, we generate explanations for a model that is trained on a dataset with controlled correlation of a spurious pattern. We make a dataset using two labels from STL-10 dataset~\citep{stl} {\it car, plane} and paste a tag {\it U} or {\it Z} in the top-left corner as shown in the left panel of Figure~\ref{fig:stl}. The probability that the examples of {\it car} are added the {\it Z} tag is p and 1-p for the {\it U} tag. Conversely for the examples of {\it plane}, the probability of {\it U} is p and {\it Z} is 1-p. We generate three training datasets with p=0, p=0.5 and p=1, and train three classification models using 2-layer convolutional network. As such, the three models are expected to have a varying and known correlation with the tag, which we hope to recover from its concept explanation.    

% Similar to the data setting of~\citet{TCAV}, we train models on datasets with varying correlation of the tag to the label. All the three models are two-layer CNNs trained using datasets with p=-1, 0, 1 where p, -p is the correlation coefficient of text literals `U', `Z' with {\it plane, car} label images respectively. 

We generate concept explanations for the three \mex{} using a concept set that includes seven car-related concepts
% : \{{\it headlights, taillights, turn signals, windshield, windshield vipers, bumpers, wheels}\} 
and three plane-related concepts (Appendix~\ref{sec:appendix:misc})
% \{{\it wings, landing gear, sky}\} 
along with the two tags {\it U, Z}.   
%  and 
% three plane-related concepts: \{{\it wings, landing gear, sky}\} along with the two tag literals: \{{\it U, Z}\}. 
We obtain the importance score of the concept {\it U} with {\it car} class using a probe-dataset that is held-out from the corresponding training dataset (i.e. probe-dataset has the same input distribution as the training dataset). The results are shown in the middle plot of Figure~\ref{fig:stl}. Since the co-occurrence probability of $U$ with {\it car} class goes from 1, 0.5 to 0 for p=0, 0.5, 1, we expect the importance score of $U$ should change from positive to negative as we move right. We note that \ours{}, along with others, show the expected decreasing importance of the tag concept. 
The result corroborates that \ours{}, along with others, estimate a faithful explanation in the standard evaluation setting. Next, we evaluate robustness to misspecified or overly-complete concept set. 
% The result corroborates that \ours{} estimates a faithful explanation of \mex{} while also being more reliable as elaborated below. 

\paragraph{Unreliability due to a over-complete concept set.}
% \begin{wrapfigure}{r}{0.5\textwidth}
% \vspace{-5mm}
%     \includegraphics[width=0.99\linewidth]{media/exptD_1.pdf}
%     % \includegraphics[width=0.48\linewidth]{}
%     \caption{Existing CBM explanation methods require careful filtering to only include relevant concepts while noise-aware estimates of \ours{} handles nuisance concepts readily by attributing zero importance. We show the average rank of the relevant concepts on Y and the number of nuisance concepts on X axis.}
%     \label{fig:tag:stability}
% \end{wrapfigure}
We generate explanations as animal (irrelevent and therefore nuisance) concepts are added (Appendix~\ref{sec:appendix:misc} contains the full list) to the relevant list of twelve original concepts. The right panel of Figure~\ref{fig:stl} depicts the average rank of true concepts (lower the better) with the number of irrelevant concepts on the horizontal axis. We note that \ours{} ranks true concepts highly even with 50 nuisance concepts while O-CBM and Y-CBM increasingly get worse with the number of irrelevant concepts as predicted by Corollary~\ref{cor:prop2}. 
% over-complete set scenario on this task. We considered the \mex{} trained on the tag dataset with p=0 and generated explanations using an in-distribution probe-dataset and gradually included animal names to the concept set, which are irrelevant for the task (see Appendix~\ref{sec:appendix:misc} for the list of animal names). The right plot of Figure~\ref{fig:stl} shows the average rank of true concepts with the number of nuisance concepts on the X axis. \ours{} is robust to the presence of irrelevant concepts in the same manner as observed with the four-colors dataset. 

% We evaluate explanations for a model trained on the tag dataset (that is described in the previous section) with randomized occurrence of tags for a given label, i.e. p=0. We use an in-domain probe dataset but when the concept set progressively included more irrelevant concepts (land animal names, please see Appendix) from 1 to 50. 

% We expect the concept set to be misspecified or over-complete since it is supplied by a human. Moreover, an ideal explanation method should attribute low importance score despite the number of irrelevant concepts in the concept set. Since the existing explanation methods do not account for the error in Step-1 when computing $w_c$ in Step-2, they require a filtering step such as in~\citep{Oikarinen2023} to weed out poorly projected concepts from Step-1. We observe from Figure~\ref{fig:tag:stability} that the \cs{} computed by \ours{} are far more stable than O-CBM even when the concept set included 50 nuisance concepts with only three relevant concepts. 

\section{Real-world evaluation}
We expect the improved modeling of our estimator to also generate higher-quality concept explanations in practice. To verify, we next explore explanations for two off-the-shelf pretrained models. 
\subsection{Scene Classification}
\label{sec:expt:real}
\begin{figure*}[t]
  \begin{minipage}{0.10\textwidth}
    \begin{figure}[H]
      \includegraphics[width=0.98\textwidth]{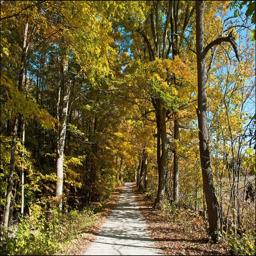}
    \end{figure}
  \end{minipage} \hfill
  \begin{minipage}{0.38\textwidth}
    \textbf{Tree Farm}\newline
    \simple{}: {\texttt{tree, field, bush}} \newline
    % {\it TCAV}: {\texttt{bridge, {\color{red} head, torso}}} \newline
    O-CBM: \texttt{forest, pot, {\color{red} sweater}}\newline
    Y-CBM: {\small \texttt{field, forest, {\color{red}elevator}}}\newline
    {\it \ours{}}: \texttt{foliage, forest, grass}
  \end{minipage}
  \begin{minipage}{0.10\textwidth}
    \begin{figure}[H]
      \includegraphics[width=0.99\textwidth]{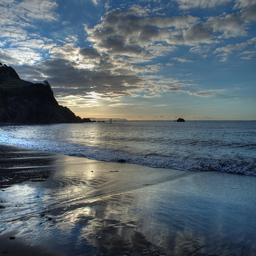}
    \end{figure}
  \end{minipage} \hfill
  \begin{minipage}{0.38\textwidth}
    \textbf{Coast}\newline
    \simple{}: \texttt{sea, water, river}\newline
    % {\it TCAV}: \texttt{{\color{red} head, leg, ear}}\newline
{\it O-CBM}: \texttt{sea, island, {\color{red} pitted}}\newline
{\it Y-CBM}: \texttt{sea, sand, {\color{red} towel rack}}\newline
{\it \ours{}}: \texttt{sea, lake, island}
  \end{minipage}
    \begin{minipage}{0.10\textwidth}
    \begin{figure}[H]
      \includegraphics[width=0.99\textwidth]{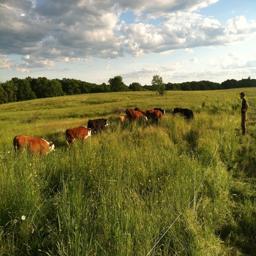}
    \end{figure}
    \end{minipage} \hfill
  \begin{minipage}{0.38\textwidth}
  \vspace{2mm}
    \textbf{Pasture}\newline
    \simple{}: \texttt{horse, sheep, grass}\newline
    % {\it TCAV}: \texttt{tree, step, {\color{red} smoke}}\newline
{\it O-CBM}: \texttt{shaft, hoof, {\color{red} exhibitor}}\newline
{\it Y-CBM}: \texttt{field, grass, {\color{red} ear}}\newline %'banded', 'gravestone', 'barbecue', 'head', 'horse', 'windscreen', 'shower stall']
{\it \ours{}}: \texttt{grass, cow, {\color{red} banded}}\newline % horse, 'fence', 'hoof', 'hay', 'animal', 'semidesert ground', 'banded']
  \end{minipage}
    \begin{minipage}{0.10\textwidth}
    \begin{figure}[H]
      \includegraphics[width=0.99\textwidth]{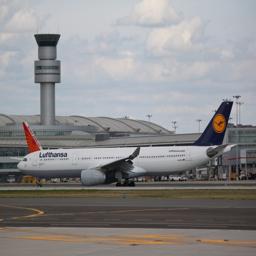}
    \end{figure}
    \end{minipage} \hfill
  \begin{minipage}{0.38\textwidth}
    \textbf{Runway}\newline
    \simple{}: \texttt{plane, field, sky}\newline
    % {\it TCAV}: \texttt{{\color{red} ashcan, radiator, dog}} \newline
    {\it O-CBM}: \texttt{plane, fuselage, {\color{red} apron}}\newline
    {\it Y-CBM}: {\small \texttt{plane, clouds, {\color{red} candlestick}}}\newline
    \ours{}: \texttt{plane, windscreen, sky}
  \end{minipage}
  \caption{Two most relevant concepts plus any mistake (marked in red) from top-10 concepts for a scene-classification model estimated with various algorithms using PASCAL (left) or ADE20K (right) probe-dataset.}
  \label{fig:exptE}
  \vspace{-3mm}
\end{figure*}

In this section, we evaluate concept explanations obtained for a scene classification model with ResNet-18 architecture pretrained on the publicly available Places365~\citep{places365}. Following the experimental setting of \citet{Ramaswamy2022}, we generate explanations when the probe-dataset is set to PASCAL~\citep{Pascal} or ADE20K~\citep{ADE}, which are both part of the Broden dataset~\citep{NDissect}. The dataset contains images with dense annotations with more than 1000 attributes. Since it is irrational to explain a scene using scene concepts, we drop around 300 concepts that are marked as scene-related in the dataset. 
% We ignored around 300 attributes describing the scene since \mex{} is itself a scene classifier. 
For the remaining 730 attributes, we defined a concept per attribute using literal name of the attribute. We picked 50 scene labels (Appendix~\ref{sec:appendix:misc} contains the full list) that have support of at least 20 examples in both ADE20K and PASCAL datasets. 
% generated explanations for 50 scene labels , which have support of at least 20 in both ADE20K and PASCAL 

We evaluate the quality of explanations by their closeness to the explanations generated using the \simple{} baseline. Because \simple{} fits an estimator using human-annotated concept labels, they are the closest to the ground-truth. For the top-20 concepts identified by \simple{}, we compute the average absolute difference in importance scores estimated using any estimation method and \simple{}. Table~\ref{tab:real:expls} presents the deviation in explanations averaged over all the 50 scene labels. Figure~\ref{fig:exptE} shows the most salient concepts for four randomly picked scene labels. We observe from the figure that top-10 concepts identified by \ours{} seem more relevant to the scene when compared with Y-CBM and O-CBM. We also evaluated the explanation quality using a standard measure for comparing ranked lists, which is presented in Appendix~\ref{sec:appendix:misc}, which further confirms the dominance of \ours{}. 

{\bf Dataset shift.}
\citet{Ramaswamy2022} demonstrated with results the drastic shift in concept explanations for the same \mex{} when using ADE20K or PASCAL as the probe-dataset. Explanations diverge partly because (a) population of concepts may vary between datasets thereby influencing their perceived importance when using standard methods, (b) noise in estimated explanations.  
We have demonstrated that \ours{} estimated importance scores have low noise and attributes high uncertainty and thereby near-zero importance to concepts that are rare or missing from the probe-dataset (Section~\ref{sec:simstudy}). For these reasons, we expect \ours{} to mitigate the data-shift problem. We confirm the same by estimating the average difference in importance scores estimated using ADE20K and PASCAL for different estimation techniques (where the average is only over salient concepts with non-zero importance). The results are shown in Table~\ref{tab:real:shift} and are inline with our prediction. 
% \jh{Reviewer may wonder the confidence interval of U-ACE?}

\begin{table}[htb]
\centering
  \setlength{\tabcolsep}{3pt} % Default value: 6pt
    \begin{tabular}{c|r|r|r|r}
    Dataset$\downarrow$ & TCAV & O-CBM & Y-CBM & \ours{}\\\hline
       ADE20K  & 0.13 & 0.19 & 0.16 & {\bf 0.09} \\
       PASCAL  & 0.41 & 0.20 & 0.18 & {\bf 0.11}
    \end{tabular}
    \captionof{table}{{\it Evaluation of explanation quality.} Each cell shows the average absolute difference of importance scores for top-20 concepts estimated using \simple{}.}
    \label{tab:real:expls}
\end{table}
\begin{table}[htb]
\centering
  \setlength{\tabcolsep}{3pt} % Default value: 6pt
    \begin{tabular}{r|r|r|r|r}
      \simple{} & TCAV & O-CBM & Y-CBM & \ours{}\\\hline
      0.41 & 0.41 & 0.32 & 0.33 & {\bf 0.19}
    \end{tabular}
    \captionof{table}{{\it Effect of data shift.} Average absolute difference between concept importance scores estimated using ADE20K and PASCAL datasets for the same \mex{} using different estimation methods.}
    \label{tab:real:shift}
\end{table}

\subsection{Object Classification}
\label{expt:simagenet}
In this section, we evaluate explanations obtained for an object classification model with ResNet-18 architecture that was pretrained on the Imagenet dataset. Unlike for Broden dataset, Imagenet does not have annotations over concepts. As such, we evaluate the quality of explanations according to how effectively they bring out previously known incidental or spurious correlations of the dataset. Salient-Imagenet~\citep{Singla2022} is an impressive annotation effort to identify prevalent and problematic dependence on co-occurring but irrelevant features for object recognition such as {\it door} feature when classifying {\it doormat}. We are interested in examining if we can discover such incidental features using concept explanations. We gathered 2,020 words by tokenising and filtering notes left by annotators of Salient-Imagenet, which we use as the concept set. One may also populate the concept set by generating a caption for each image using the pretraied multimodal model. For the sake of evaluation, we randomly identified 40 classes with at least one known incidental feature, and manually lexicalized the concept corresponding to the incidental feature for each class (see Appendix~\ref{appendix:simagenet} contains the full list). 

In Table~\ref{tab:simagenet_discovery}, we show the success of different estimation methods in discovering a known incidental correlation. We observe that \ours{} is the most effective in capturing the dependence. Although O-CBM and Y-CBM seem effective, we observed that they identified many nonsensical concepts (items that are very unlikely to be relevant) as relevant as shown in Table~\ref{tab:simagenet_example}. 

We are glad that \ours{} discovered known spurious features for 36 of the 40 labels (Table~\ref{tab:simagenet_discovery}) with little effort while also being accurate (Table~\ref{tab:simagenet_example}). However, our experiment only quantified recall of the spurious feature with only anecdotal evidence for precision. We leave for a future study a more tighter evaluation quantifying both recall and precision along with user studies to measure the utility of \ours{} for model debugging.  
\begin{table}[htb]
    \centering
    \begin{tabular}{l|r|r|r}
        k& O-CBM & Y-CBM & \ours{}\\\hline
        10 &  14 & 22 & {\bf 25}\\
        25 &  23 & 28 & {\bf 33}\\
        50 & 31 & 29 & {\bf 36}
    \end{tabular}
    \caption{Number of classes (of the total 40) for which the known spurious feature is found in the top-k concepts estimated using different methods shown in the top-row.}
    \label{tab:simagenet_discovery}
\end{table}

\begin{table}[htb]
    \centering
    \begin{tabular}{l|c|l}
    \setlength{\tabcolsep}{2pt}
         % howler & Y-CBM & \texttt{\color{red}{cat, underside}, macaque}\\
         % \hspace{2mm} monkey& O-CBM & \texttt{\color{red}{polecat}, marmoset, \color{red}{loudspeaker}}\\
         %  & \ours{} & \texttt{siamang, langur, colobus}\\\hline
         % Dogsled & Y-CBM & \texttt{snow, snowmobile, \color{red}{longlegs}}\\
         % & O-CBM & \texttt{snowmobile, \color{red}{train, christmas}}\\
         % & \ours{} & \texttt{snowmobile, snow, snowplough}\\\hline
         howler & Y-CBM & \texttt{underside, monkey, \color{red}{cat}}\\
         \hspace{3mm} monkey& O-CBM & \texttt{\color{red}{polecat, porcupine, bear}}\\
         & \ours{} & \texttt{siamang, colobus, gibbon}\\\hline
        coast & Y-CBM & \texttt{shore, \color{red}{view, maillot}} \\
         & O-CBM & \texttt{beach, shore, \color{red}{tyre}} \\
         & \ours{} & \texttt{beach, sea, boats} \\\hline
    \end{tabular}
    \caption{Top-2 concepts plus any mistake identified from top-10 important concepts shown in red. More results in Table~\ref{tab:simagenet_examples:more}}
    \label{tab:simagenet_example}
\end{table}

% \subsection{Ablation: What is the effect of input noise modeling on~\ours{}?}
% Figure~\ref{fig:tag:stability}

\section{Related Work}

{\bf Concept Bottleneck Models} use a set of predefined human-interpretable concepts as an intermediate feature representation to make the predictions ~\citep{Koh2020,Bau2017,TCAV,Zhou2018}. CBM allows human test-time intervention which has been shown to improve overall accuracy ~\citep{barker2023selective}. Traditionally, they require labelled data with concept annotations and accuracy is typically worse than the standard models without concept bottleneck. To address the limitation of concept annotation, recent works have leveraged large pretrained multimodal models like CLIP ~\citep{Oikarinen2023, PosthocCBMICLR2023}. 
% There have also been efforts to enhance the reliability of CBMs by focusing on the information leakage problem ~\citep{Havasi2022, marconato2022glancenets}, where the linear model weights estimated from concept activations utilize the unintended information, affecting the interpretability. 
Concept Embedding Models (CEM) ~\citep{espinosa2022concept} overcome the trade-off between accuracy and interpretability by learning high-dimensional concept embeddings. However, addressing the noise in the concept prediction remains underexplored. \citet{collins2023human} have studied \textit{human} uncertainty in concept-based models and elucidate the importance of considering uncertainty over concepts in improving the reliability of the model. Closely related to our work, \citet{KimJung23} propose the Probabilistic Concept Bottleneck Models (ProbCBM). They too argue for the need to model uncertainty in concept prediction for reliable explanations. However, their method of noise estimation in \cs{} requires retraining the model and cannot be applied directly when \cs{} are estimated using CLIP. Moreover, they use simple MC sampling to account for noise in \cs{}, which is not nearly as effective (Appendix~\ref{sec:appendix:uncert}).

{\bf Concept-based explanations} use a separate probe dataset to first learn the concept and then explain through decomposition either the individual predictions or overall label features. \citet{Yeh2022} contains a brief summary of existing concept based explanation methods. Our proposed method is very similar to concept-based explanations (CBE)~\citep{TCAV,Bau2017,Zhou2018,Ghorbani19}. \citet{Ramaswamy2022} emphasised that the concepts learned are sensitive to the probe dataset used and therefore pose problems when transferring to applications that have distribution shift from the probe dataset. They further highlight the drawback of existing CBE methods that concepts can sometimes be harder to learn than the label itself (meaning the explanations may not be causal) and that the typical number of concepts used for explanations far exceed what a typical human can parse easily. \citet{Achtibat2022} championed an explanation method that provides explanation highlighting important feature (answering ``where'') and what concepts are used for prediction thereby combining the strengths of global and local explanation methods. \citet{Choi23} have built upon the current developments in CBE methods for providing explanations for out-of-distribution detectors. \citet{Wu23causal} introduced the causal concept based explanation method (Causal Proxy Model), that provides explanations for NLP models using counterfactual texts. \citet{Moayeri23} also used CLIP to interpret the representations of a different model trained on uni-modal data.

 %Few other recent related work from ICML 2023~\citep{Choi23,Wu23causal}.

\section{Conclusion}
We studied concept explanation methods with a focus on data-efficient systems that exploit pretrained multimodal models. We highlighted the quality challenge of the of existing estimators of concept explanations via simple examples and motivated the need for modelling \textit{uncertainty} in their estimation. Accordingly, we proposed an uncertainty-aware and data-efficient estimator called \ours{}. We demonstrated the merits of our estimator through theoretical analysis, controlled study experiments and three challenging real-world evaluation with order of thousand concepts. Our results establish the strong promise of concept explanations estimated using our Bayesian method for effective model debugging. \\
% While we do not run user studies in this first piece, a natural and exciting next step is to explore whether \ours{} explanations are preferred by real humans. Further, it is important to note that our experimental study focuses on CLIP to specify concepts; our approach ought to be able to extend to settings with prespecified concepts but its efficacy needs to be verified empirically through future work. We also do not model epistemic uncertainty in CLIP's knowledge of a concept. [..something about the prior?...]. 
{\bf Limitations and Future Work}. Pretrained multimodal models enabled us to work with an open-world specification of the set of concepts, without necessitating expensive annotation data. Yet, dependence on a pretrained model may limit application in certain specialised domains, which while emerging in domains like healthcare ~\citep{huang2023visual}, may temporarily hinder widespread adoption. In the same vein, an estimator that also models the uncertainty due to lack of knowledge of the pretrained multimodal model about a concept will be better suited for addressing any lapses in the pretrained model, which we leave for the future. Additionally, while we do not run user studies in this work, a natural and exciting next step is to explore whether \ours{} explanations are preferred by real humans. 

\newpage
%Bibliography
\bibliography{references}  

\begin{thebibliography}{36}
\providecommand{\natexlab}[1]{#1}
\providecommand{\url}[1]{\texttt{#1}}
\expandafter\ifx\csname urlstyle\endcsname\relax
  \providecommand{\doi}[1]{doi: #1}\else
  \providecommand{\doi}{doi: \begingroup \urlstyle{rm}\Url}\fi

\bibitem[Achtibat et~al.(2022)Achtibat, Dreyer, Eisenbraun, Bosse, Wiegand,
  Samek, and Lapuschkin]{Achtibat2022}
Reduan Achtibat, Maximilian Dreyer, Ilona Eisenbraun, Sebastian Bosse, Thomas
  Wiegand, Wojciech Samek, and Sebastian Lapuschkin.
\newblock From" where" to" what": Towards human-understandable explanations
  through concept relevance propagation.
\newblock \emph{arXiv preprint arXiv:2206.03208}, 2022.

\bibitem[Barker et~al.(2023)Barker, Collins, Dvijotham, Weller, and
  Bhatt]{barker2023selective}
Matthew Barker, Katherine~M Collins, Krishnamurthy Dvijotham, Adrian Weller,
  and Umang Bhatt.
\newblock Selective concept models: Permitting stakeholder customisation at
  test-time.
\newblock \emph{arXiv preprint arXiv:2306.08424}, 2023.

\bibitem[Bau et~al.(2017{\natexlab{a}})Bau, Zhou, Khosla, Oliva, and
  Torralba]{Bau2017}
David Bau, Bolei Zhou, Aditya Khosla, Aude Oliva, and Antonio Torralba.
\newblock Network dissection: Quantifying interpretability of deep visual
  representations.
\newblock In \emph{Proceedings of the IEEE conference on computer vision and
  pattern recognition}, pages 6541--6549, 2017{\natexlab{a}}.

\bibitem[Bau et~al.(2017{\natexlab{b}})Bau, Zhou, Khosla, Oliva, and
  Torralba]{NDissect}
David Bau, Bolei Zhou, Aditya Khosla, Aude Oliva, and Antonio Torralba.
\newblock Network dissection: Quantifying interpretability of deep visual
  representations.
\newblock In \emph{Proceedings of the IEEE conference on computer vision and
  pattern recognition}, pages 6541--6549, 2017{\natexlab{b}}.

\bibitem[Bingham et~al.(2019)Bingham, Chen, Jankowiak, Obermeyer, Pradhan,
  Karaletsos, Singh, Szerlip, Horsfall, and Goodman]{pyro}
Eli Bingham, Jonathan~P. Chen, Martin Jankowiak, Fritz Obermeyer, Neeraj
  Pradhan, Theofanis Karaletsos, Rohit Singh, Paul~A. Szerlip, Paul Horsfall,
  and Noah~D. Goodman.
\newblock Pyro: Deep universal probabilistic programming.
\newblock \emph{J. Mach. Learn. Res.}, 20:\penalty0 28:1--28:6, 2019.
\newblock URL \url{http://jmlr.org/papers/v20/18-403.html}.

\bibitem[Chen et~al.(2014)Chen, Mottaghi, Liu, Fidler, Urtasun, and
  Yuille]{Pascal}
Xianjie Chen, Roozbeh Mottaghi, Xiaobai Liu, Sanja Fidler, Raquel Urtasun, and
  Alan Yuille.
\newblock Detect what you can: Detecting and representing objects using
  holistic models and body parts.
\newblock In \emph{Proceedings of the IEEE conference on computer vision and
  pattern recognition}, pages 1971--1978, 2014.

\bibitem[Choi et~al.(2023)Choi, Raghuram, Feng, Chen, Jha, and Prakash]{Choi23}
Jihye Choi, Jayaram Raghuram, Ryan Feng, Jiefeng Chen, Somesh Jha, and Atul
  Prakash.
\newblock Concept-based explanations for out-of-distribution detectors.
\newblock In \emph{International Conference on Machine Learning}, pages
  5817--5837. PMLR, 2023.

\bibitem[Coates et~al.(2011)Coates, Ng, and Lee]{stl}
Adam Coates, Andrew Ng, and Honglak Lee.
\newblock An analysis of single-layer networks in unsupervised feature
  learning.
\newblock In \emph{Proceedings of the fourteenth international conference on
  artificial intelligence and statistics}, pages 215--223. JMLR Workshop and
  Conference Proceedings, 2011.

\bibitem[Collins et~al.(2023)Collins, Barker, Espinosa~Zarlenga, Raman, Bhatt,
  Jamnik, Sucholutsky, Weller, and Dvijotham]{collins2023human}
Katherine~Maeve Collins, Matthew Barker, Mateo Espinosa~Zarlenga, Naveen Raman,
  Umang Bhatt, Mateja Jamnik, Ilia Sucholutsky, Adrian Weller, and
  Krishnamurthy Dvijotham.
\newblock Human uncertainty in concept-based ai systems.
\newblock In \emph{Proceedings of the 2023 AAAI/ACM Conference on AI, Ethics,
  and Society}, pages 869--889, 2023.

\bibitem[Espinosa~Zarlenga et~al.(2022)Espinosa~Zarlenga, Barbiero, Ciravegna,
  Marra, Giannini, Diligenti, Shams, Precioso, Melacci, Weller,
  et~al.]{espinosa2022concept}
Mateo Espinosa~Zarlenga, Pietro Barbiero, Gabriele Ciravegna, Giuseppe Marra,
  Francesco Giannini, Michelangelo Diligenti, Zohreh Shams, Frederic Precioso,
  Stefano Melacci, Adrian Weller, et~al.
\newblock Concept embedding models: Beyond the accuracy-explainability
  trade-off.
\newblock \emph{Advances in Neural Information Processing Systems},
  35:\penalty0 21400--21413, 2022.

\bibitem[Ghorbani et~al.(2019)Ghorbani, Wexler, Zou, and Kim]{Ghorbani19}
Amirata Ghorbani, James Wexler, James~Y Zou, and Been Kim.
\newblock Towards automatic concept-based explanations.
\newblock \emph{Advances in neural information processing systems}, 32, 2019.

\bibitem[Huang et~al.(2023)Huang, Bianchi, Yuksekgonul, Montine, and
  Zou]{huang2023visual}
Zhi Huang, Federico Bianchi, Mert Yuksekgonul, Thomas~J Montine, and James Zou.
\newblock A visual--language foundation model for pathology image analysis
  using medical twitter.
\newblock \emph{Nature medicine}, 29\penalty0 (9):\penalty0 2307--2316, 2023.

\bibitem[Kim et~al.(2018)Kim, Wattenberg, Gilmer, Cai, Wexler, Viegas,
  et~al.]{TCAV}
Been Kim, Martin Wattenberg, Justin Gilmer, Carrie Cai, James Wexler, Fernanda
  Viegas, et~al.
\newblock Interpretability beyond feature attribution: Quantitative testing
  with concept activation vectors (tcav).
\newblock In \emph{International conference on machine learning}, pages
  2668--2677. PMLR, 2018.

\bibitem[Kim et~al.(2023{\natexlab{a}})Kim, Jung, Park, Kim, and
  Yoon]{KimJung23}
Eunji Kim, Dahuin Jung, Sangha Park, Siwon Kim, and Sungroh Yoon.
\newblock Probabilistic concept bottleneck models.
\newblock \emph{arXiv preprint arXiv:2306.01574}, 2023{\natexlab{a}}.

\bibitem[Kim et~al.(2023{\natexlab{b}})Kim, Watkins, Russakovsky, Fong, and
  Monroy-Hern{\'a}ndez]{Kim2023CHI}
Sunnie~SY Kim, Elizabeth~Anne Watkins, Olga Russakovsky, Ruth Fong, and
  Andr{\'e}s Monroy-Hern{\'a}ndez.
\newblock " help me help the ai": Understanding how explainability can support
  human-ai interaction.
\newblock In \emph{Proceedings of the 2023 CHI Conference on Human Factors in
  Computing Systems}, pages 1--17, 2023{\natexlab{b}}.

\bibitem[Koh and Liang(2017)]{Koh2017}
Pang~Wei Koh and Percy Liang.
\newblock Understanding black-box predictions via influence functions.
\newblock In \emph{International conference on machine learning}, pages
  1885--1894. PMLR, 2017.

\bibitem[Koh et~al.(2020)Koh, Nguyen, Tang, Mussmann, Pierson, Kim, and
  Liang]{Koh2020}
Pang~Wei Koh, Thao Nguyen, Yew~Siang Tang, Stephen Mussmann, Emma Pierson, Been
  Kim, and Percy Liang.
\newblock Concept bottleneck models.
\newblock In \emph{International Conference on Machine Learning}, pages
  5338--5348. PMLR, 2020.

\bibitem[Moayeri et~al.(2023)Moayeri, Rezaei, Sanjabi, and Feizi]{Moayeri23}
Mazda Moayeri, Keivan Rezaei, Maziar Sanjabi, and Soheil Feizi.
\newblock Text-to-concept (and back) via cross-model alignment.
\newblock \emph{arXiv preprint arXiv:2305.06386}, 2023.

\bibitem[Oikarinen et~al.(2023)Oikarinen, Das, Nguyen, and Weng]{Oikarinen2023}
Tuomas Oikarinen, Subhro Das, Lam~M. Nguyen, and Tsui-Wei Weng.
\newblock Label-free concept bottleneck models.
\newblock In \emph{International Conference on Learning Representations}, 2023.
\newblock URL \url{https://openreview.net/forum?id=FlCg47MNvBA}.

\bibitem[Radford et~al.(2021)Radford, Kim, Hallacy, Ramesh, Goh, Agarwal,
  Sastry, Askell, Mishkin, Clark, et~al.]{clip}
Alec Radford, Jong~Wook Kim, Chris Hallacy, Aditya Ramesh, Gabriel Goh,
  Sandhini Agarwal, Girish Sastry, Amanda Askell, Pamela Mishkin, Jack Clark,
  et~al.
\newblock Learning transferable visual models from natural language
  supervision.
\newblock In \emph{International conference on machine learning}, pages
  8748--8763. PMLR, 2021.

\bibitem[Ramaswamy et~al.(2022{\natexlab{a}})Ramaswamy, Kim, Fong, and
  Russakovsky]{Ramaswamy2022}
Vikram~V Ramaswamy, Sunnie~SY Kim, Ruth Fong, and Olga Russakovsky.
\newblock Overlooked factors in concept-based explanations: Dataset choice,
  concept salience, and human capability.
\newblock \emph{arXiv preprint arXiv:2207.09615}, 2022{\natexlab{a}}.

\bibitem[Ramaswamy et~al.(2022{\natexlab{b}})Ramaswamy, Kim, Meister, Fong, and
  Russakovsky]{Ramaswamy2022elude}
Vikram~V Ramaswamy, Sunnie~SY Kim, Nicole Meister, Ruth Fong, and Olga
  Russakovsky.
\newblock Elude: Generating interpretable explanations via a decomposition into
  labelled and unlabelled features.
\newblock \emph{arXiv preprint arXiv:2206.07690}, 2022{\natexlab{b}}.

\bibitem[Ribeiro et~al.(2016)Ribeiro, Singh, and Guestrin]{Ribeiro2016}
Marco~Tulio Ribeiro, Sameer Singh, and Carlos Guestrin.
\newblock " why should i trust you?" explaining the predictions of any
  classifier.
\newblock In \emph{Proceedings of the 22nd ACM SIGKDD international conference
  on knowledge discovery and data mining}, pages 1135--1144, 2016.

\bibitem[Salakhutdinov(2011)]{SMLRuss}
Russ Salakhutdinov.
\newblock Statistical machine learning, 2011.
\newblock URL
  \url{https://www.utstat.toronto.edu/~rsalakhu/sta4273/notes/Lecture2.pdf#page=10}.

\bibitem[Selvaraju et~al.(2017)Selvaraju, Cogswell, Das, Vedantam, Parikh, and
  Batra]{Selvaraju2017}
Ramprasaath~R Selvaraju, Michael Cogswell, Abhishek Das, Ramakrishna Vedantam,
  Devi Parikh, and Dhruv Batra.
\newblock Grad-cam: Visual explanations from deep networks via gradient-based
  localization.
\newblock In \emph{Proceedings of the IEEE international conference on computer
  vision}, pages 618--626, 2017.

\bibitem[Singla and Feizi(2022)]{Singla2022}
Sahil Singla and Soheil Feizi.
\newblock Salient imagenet: How to discover spurious features in deep learning?
\newblock In \emph{International Conference on Learning Representations}, 2022.
\newblock URL \url{https://openreview.net/forum?id=XVPqLyNxSyh}.

\bibitem[Wah et~al.(2011)Wah, Branson, Welinder, Perona, and
  Belongie]{WahCUB_200_2011}
Catherine Wah, Steve Branson, Peter Welinder, Pietro Perona, and Serge
  Belongie.
\newblock The caltech-ucsd birds-200-2011 dataset.
\newblock 2011.

\bibitem[Wikipedia(2023{\natexlab{a}})]{wiki:Cosine_similarity}
Wikipedia.
\newblock {Cosine similarity} --- {W}ikipedia{,} the free encyclopedia.
\newblock
  \url{http://en.wikipedia.org/w/index.php?title=Cosine\%20similarity&oldid=1178409159},
  2023{\natexlab{a}}.
\newblock [Online; accessed 18-November-2023].

\bibitem[Wikipedia(2023{\natexlab{b}})]{wiki:Kendall_tau_distance}
Wikipedia.
\newblock {Kendall tau distance} --- {W}ikipedia{,} the free encyclopedia.
\newblock
  \url{http://en.wikipedia.org/w/index.php?title=Kendall\%20tau\%20distance&oldid=1163706720},
  2023{\natexlab{b}}.
\newblock [Online; accessed 25-September-2023].

\bibitem[Wikipedia(2024)]{wiki:OLS}
Wikipedia.
\newblock {Ordinary least squares} --- {W}ikipedia{,} the free encyclopedia.
\newblock
  \url{http://en.wikipedia.org/w/index.php?title=Ordinary\%20least%20squares},
  Feb 2024.
\newblock URL \url{https://en.wikipedia.org/wiki/Ordinary_least_squares}.

\bibitem[Wu et~al.(2023)Wu, D’Oosterlinck, Geiger, Zur, and
  Potts]{Wu23causal}
Zhengxuan Wu, Karel D’Oosterlinck, Atticus Geiger, Amir Zur, and Christopher
  Potts.
\newblock Causal proxy models for concept-based model explanations.
\newblock In \emph{International Conference on Machine Learning}, pages
  37313--37334. PMLR, 2023.

\bibitem[Yeh et~al.(2022)Yeh, Kim, and Ravikumar]{Yeh2022}
Chih-Kuan Yeh, Been Kim, and Pradeep Ravikumar.
\newblock Human-centered concept explanations for neural networks.
\newblock \emph{arXiv preprint arXiv:2202.12451}, 2022.

\bibitem[Yuksekgonul et~al.(2022)Yuksekgonul, Wang, and
  Zou]{PosthocCBMICLR2023}
Mert Yuksekgonul, Maggie Wang, and James Zou.
\newblock Post-hoc concept bottleneck models.
\newblock \emph{arXiv preprint arXiv:2205.15480}, 2022.

\bibitem[Zhou et~al.(2017{\natexlab{a}})Zhou, Lapedriza, Khosla, Oliva, and
  Torralba]{places365}
Bolei Zhou, Agata Lapedriza, Aditya Khosla, Aude Oliva, and Antonio Torralba.
\newblock Places: A 10 million image database for scene recognition.
\newblock \emph{IEEE Transactions on Pattern Analysis and Machine
  Intelligence}, 2017{\natexlab{a}}.

\bibitem[Zhou et~al.(2017{\natexlab{b}})Zhou, Zhao, Puig, Fidler, Barriuso, and
  Torralba]{ADE}
Bolei Zhou, Hang Zhao, Xavier Puig, Sanja Fidler, Adela Barriuso, and Antonio
  Torralba.
\newblock Scene parsing through ade20k dataset.
\newblock In \emph{Proceedings of the IEEE conference on computer vision and
  pattern recognition}, pages 633--641, 2017{\natexlab{b}}.

\bibitem[Zhou et~al.(2018)Zhou, Sun, Bau, and Torralba]{Zhou2018}
Bolei Zhou, Yiyou Sun, David Bau, and Antonio Torralba.
\newblock Interpretable basis decomposition for visual explanation.
\newblock In \emph{Proceedings of the European Conference on Computer Vision
  (ECCV)}, pages 119--134, 2018.

\end{thebibliography}

\newpage
\appendix
\onecolumn

\title{Estimation of Concept Explanations Should be Uncertainty-Aware\\(Supplementary Material)}
\maketitle

\section{Miscellaneous}
\subsection{Derivation of posterior on weights}
\label{sec:appendix:posterior}
The result of posterior distribution of weights follows directly from the form of posterior under normal prior on weights as explained as~\citet{SMLRuss} (Slide 10). For the sake of completeness, we also derive the result below. 
\begin{align*}
    \Pr(\vec{w}\mid C_X, Y) &\propto \Pr(Y\mid C_X, \vec{w})\Pr(\vec{w})\\
    &=\mathcal{N}(Y; C_X^T\vec{w}, \beta^{-1})\mathcal{N}(\vec{w}; 0, S_0)\text{ where } S_0^{-1}=\lambda^{-1}\text{diag}(\epsilon\epsilon^T)\\
    &\propto \exp\left\{ -\frac{\beta}{2} (Y-C_X^T\vec{w})^T(Y-C_X^T\vec{w}) - \frac{1}{2}\vec{w}^TS_0^{-1}\vec{w} \right\}\\
    &\propto \exp\left\{ -\frac{1}{2}\vec{w}^T[\beta C_XC_X^T + S_0^{-1}]\vec{w} - \beta(C_XY)\vec{w} \right\}
\end{align*}
We see that the posterior also takes the form of normal distribution with $\Sigma^{-1} = \beta C_XC_X^T + S_0^{-1}$ and $\mu = \beta\Sigma C_XY$.

\subsection{Algorithm}
We describe the algorithm summarizing \ours{} in~\ref{alg:ours}. An additional technical detail of the algorithm is a step to sparsify weights as described below. \\
{\bf Sparsifying weights for interpretability.}
As a dense weight matrix can be hard to interpret, we induce sparsity in $W_c$ by setting all the values below a threshold to zero. We pick the threshold such that the accuracy on train split does not fall by more than $\kappa$, which is a positive hyperparameter.

\begin{algorithm}
  \caption{\oursfull{} (\ours)}\label{alg:ours}
\begin{algorithmic}
  \Require $\mathcal{D}$=$\{\vx_1, \vx_2, \dots, \vx_N\}$, $\mathcal{T}=\{T_1, T_2, \dots, T_K\}$, f (\mex{}), g (CLIP), $\kappa$
  \For{$y=1,\dots, L$}
  \State Y = $[f(\vx)[y]$ for $\vx\in \mathcal{D}^T]$ \Comment{Gather logits}
  \State $C_X = [\vec{m}(\vx_1), \dots, \vec{m}(\vx_N)]$, $\epsilon$ = $\mathbb{E}_{\mathcal{D}}[\vec{s}(\vx)]$ \Comment{Estimate $\vec{m}(\vx), \vec{s}(\vx)$ (Section~\ref{sec:method:noise})}
  \State $\vec{w}_y\sim \mathcal{N}(\mu_y, \Sigma_y)$ where $\mu_y, \Sigma_y$ from Equation~\ref{eq:posterior} 
  \Comment{Estimate $\lambda, \beta$ using MLL }
  \EndFor
  \State $W_{c}$ = sparsify($[\vec{\mu}_1, \vec{\mu}_2, \dots \vec{\mu}_L]$, $\kappa$)\Comment{Suppress less useful weights, Section~\ref{sec:method}}
  \State \Return $W_{c}, [\text{diag}(\Sigma_1), \text{diag}(\Sigma_2), \dots \text{diag}(\Sigma_L)]$
\end{algorithmic}
\end{algorithm}

\section{Maximum Likelihood Estimation of \ours{} parameters}
\label{sec:appendix:mle}
The posterior on weights shown in Equation~\ref{eq:posterior} has two parameters: $\lambda, \beta$ as shown below with $C_X$ and Y are array of concept activations and logit scores (see Algorithm~\ref{alg:ours}). 
\begin{align*}
\vec{w}\sim \mathcal{N}(\mu, \Sigma)\qquad \text{ where } \mu = \beta\Sigma C_XY, \quad \Sigma^{-1} = \beta C_XC_X^T + \lambda^{-1} diag(\epsilon\epsilon^T)
\end{align*}
We obtain the best values of $\lambda$ and $\beta$ that maximize the log-likelihood objective shown below. 
\begin{align*}
    \lambda^*, \beta^* = \argmax_{\lambda, \beta} &\quad \mathbb{E}_Z[-\frac{\beta^2\|Y - (C_X + Z)^T\vec{w}(\lambda, \beta)\|^2}{2} + \log(\beta)]\\
    &\text{ where Z is uniformly distributed in the range given by error intervals}\\
    &Z\sim Unif([-\vec{s}(\vx_1), -\vec{s}(\vx_2), \dots, ], [\vec{s}(\vx_1), \vec{s}(\vx_2), \dots, ])
\end{align*}
We implement the objective using Pyro software library~\citep{pyro} and Adam optimizer. 

\section{Proof of Proposition~\ref{prop:1}}
\label{sec:appendix:proof1}
We restate the result for clarity. \\
For a concept k and $cos(\alpha_k)$ defined as $\csim(e(v_{k}, f, \mathcal{D}), e(w_{k}, g, \mathcal{D}))$, we have the following result when concept activations in $f$ for an instance $\vx$ are computed as $\csim(f(\vx), v_k)$ instead of $v_k^Tf(\vx)$. 
$$\vec{m}(\vx)_k=cos(\theta_k)cos(\alpha_k), \quad \vec{s}(\vx)_k=sin(\theta_k)sin(\alpha_k)$$
where cos($\theta_k$)=$\csim(g_{text}(T_k), g(\vx))$ and $\vec{m}(\vx)_k, \vec{s}(\vx)_k$ denote the $k^{th}$ element of the vector.
\begin{proof}
Corresponding to $v_k$ in $f$, we assume there is an equivalent vector $w$ in the embedding space of g such that $\csim(f(\vx), v_k)=\csim(g(\vx), w$) for any $\vx$. For example, the assumption is met when there is a linear mapping between $f(\vx)\approx Wg(\vx)$~\citep{Moayeri23} for a unitary matrix ($W^TW=I$), in which case w is simply $Wv_k$. For such a w, the following condition on $cos(\alpha_k)$ must hold as well. 
\begin{align*}
    cos(\alpha_k) = \csim(e(v_{k}, f, \mathcal{D}), e(w_{k}, g, \mathcal{D})) = \csim(e(w, g, \mathcal{D}), e(w_{k}, g, \mathcal{D}))
\end{align*}
Denote the matrix of vectors embedded using $g$ by $G=[g(\vx_1), g(\vx_2), \dots, G(\vx_N)]^T$ a $N\times D$ matrix (D is the dimension of $g$ embeddings). Let U be a matrix with S basis vectors of size $S\times D$. We can express each vector as a combination of basis vectors and therefore $G=AU$ for a $N\times S$ matrix A.  

Substituting the terms in the $\csim$ expression, we have:
\begin{align*}
 cos(\alpha_k) &= \csim(Gw, Gw_k) = \csim(AUw, AUw_k) \\
 &= \frac{w^TU^TA^TAUw_k}{\sqrt{(w^TU^TA^TAUw)(w_k^TU^TA^TAUw_k)}}. 
\end{align*}
If the examples in $\mathcal{D}$ are diversely distributed without any systematic bias, $A^TA$ is proportional to the identity matrix, meaning the basis of G and W are effectively the same. We therefore have $cos(\alpha_k) = \csim(Gw, Gw_k)=\csim(Uw, Uw_k)$, i.e. the projection of $w, w_k$ on the subspace spanned by the embeddings have $cos(\alpha_k)$ cosine similarity. Since $w, w_k$ are two vectors that are $\alpha_k$ apart, an arbitrary new example $\vx$ that is at an angle of $\theta$ from $w_k$ is at an angle of $\theta\pm \alpha_k$ from w. The cosine similarity follows as below. 
\begin{align*}
    cos(\theta) = \csim(w_k, g(\vx)) \implies \csim(w, g(\vx)) &= cos(\theta \pm \alpha_k) \\ &=cos(\theta)cos(\alpha_k) \pm sin(\theta)sin(\alpha_k)
\end{align*}
Because $w$ is a vector in $g$ corresponding to $v_k$ in $f$, $\csim(w, g(\vx)) = \csim(v_k, f(\vx))$.  
\end{proof}

\section{Proof of Proposition~\ref{prop:2}}
\label{sec:appendix:proof2}
The concept importance estimated by \ours{} when the input dimension is sufficiently large and for some $\lambda>0$ is approximately given by $v_k=\frac{\vu_k^T\vw}{\vu_k^T\vu_k + \lambda\sigma_k^2}$. On the other hand, the importance scores estimated using vanilla linear estimator under the same conditions is distributed as $v_k\sim \mathcal{N}(\frac{\vu_k^T\vw}{\vu_k^T\vu_k}, \sigma_k^2\frac{\|\vw\|^2}{\|\vu_k\|^2})$.
\begin{proof}
We use the known result that inner product of two random vectors is close to 0 when the number of dimensions is large, i.e. $\vu_i^T\vu_j\approx 0, i\neq j$. 

{\bf Solution with standard estimator.}
We first show the solution using vanilla estimator is distributed as given by the result above. 
We wish to estimate $v_1, v_2, \dots$ such that we approximate the prediction of \mex: $y=\vw^T\vx$. We denote by $\vw_k$ sampled from the normal distribution of concept vectors. We require $w^T\vx\approx \sum_k v_k\vw_k^T\vx$. In effect, we are optimising for $v$s such that $\|\vw - \sum_k v_k\vw_k\|^2$ is minimized. We multiply the objective by $\vu_k$ and use the result that random vectors are almost orthogonal in high-dimensions to arrive at objective $\argmin_{v_k} \|\vw_k^T\vw - v_k(\vw_k^T\vw_k)\|$. Which is minimized trivially when $v_k = \frac{\vw_k^T\vw}{\|\vw_k\|^2}$. 
Since $\vw_k$ is normally distributed with $\mathcal{N}(\vu_k, \sigma_k^2I)$, $\vw_k^T\vw = (\vu_k + \vec{\epsilon})^T\vw, \quad \vec\epsilon\sim \mathcal{N}(0, I)$ is also normally distributed with $\mathcal{N}(\vu_k^T\vw, \sigma_k^2\|\vw\|^2)$. We approximate the denominator with its average and ignoring its variance, i.e. $\|\vw_k\|^2 = \mathcal{N}(\|\vu_k\|^2, \sigma_k^2)\approx \|\vu_k\|^2$ which is when $\|\vu_k\|^2>>\sigma^2$. We therefore have the result that $v_k$ is normally distributed with mean $\frac{\vu_k^T\vw}{\|\vu_k\|^2}$ and variance $\sigma_k^2\frac{\|\vw\|^2}{\|\vu_k\|^2}$.

{\bf Solution with \ours{}.} 
Unlike the standard estimator, \ours{} seeks a solution by first estimating the distribution on \cs{}. We make an approximation of our proposed estimation of \ours{} and find a solution using the following objective, recall that the $k^{th}$ activation vector is defined to have been sampled from $\mathcal{N}(\vu_k, \sigma_k^2I)$.
$$
    \ell = \argmin_v \{\|\vw-\sum_k v_k\vu_k\|^2 + \lambda\sum_k \sigma_k^2 v_k^2\}
$$
With sufficient number of examples, the mean and variance $\vu_k, \sigma^2$ can be faithfully estimated. The above objective captures the essence of \ours{}(Eqn. \eqref{eq:posterior}) because (1) it fits the observations, and (2) regularizes importance of concepts with high variance. We now proceed with finding the closed form solution of $v_k$.
\begin{align*}
    &\ell = \argmin_v \{\|\vw-\sum_k v_k\vu_k\|^2 + \lambda\sum_k \sigma_k^2 v_k^2\}\\
    &\text{setting }\frac{\partial \ell}{\partial v_k} = 0 \text{ we obtain}\\
    &-\vu_k^T(w - \sum_j v_j\vu_j) + \lambda \sigma_k^2 v_k = 0\\
    &\text{ and using almost zero inner product result stated above, i.e. $\vu_i^T\vu_j\approx 0, i\neq j$, we get}\\
    & \implies v_k = \frac{\vu_k^T\vw}{\|\vu_k\|^2 + \lambda\sigma_k^2}
\end{align*}
\end{proof}

\section{Corollary of Proposition~\ref{prop:2}}
\label{sec:cor:2}
Following the result of Proposition~\ref{prop:2}, we have the following result on the $v_k$ estimated by \ours{} and the standard linear estimator. 
\begin{cor}
    For the data setup of Proposition~\ref{prop:2}, the following results holds when $\vu_1=\vw, \sigma_1\approx 0$ and $\vu_k^T\vw=0,\quad \forall k \in [2, K]$. Then the probability that the standard estimator returns the first concept as the most salient decreases exponentially with the number of concepts. On the other hand, the importance score assigned by \ours{} is 1 for the only relevant first concept and 0 otherwise. 
\end{cor}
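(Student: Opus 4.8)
The plan is to obtain both claims by substituting the specific choice $\vu_1 = \vw$, $\sigma_1 \approx 0$, $\vu_k^T\vw = 0$ (for $k \geq 2$) into the two formulas of Proposition~\ref{prop:2}, and then to bound the failure probability of the standard estimator.

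\emph{\ours{} scores.} Proposition~\ref{prop:2} gives $v_k = \vu_k^T\vw / (\vu_k^T\vu_k + \lambda\sigma_k^2)$. I would first note that for the first concept, $\vu_1 = \vw$ and $\sigma_1 \approx 0$ yield $v_1 = \|\vw\|^2 / (\|\vw\|^2 + \lambda\sigma_1^2) \approx 1$, and that for each nuisance concept $k \in [2,K]$ the numerator $\vu_k^T\vw$ vanishes, so $v_k = 0$ regardless of $\lambda$, $\sigma_k$, or $K$. This settles the \ours{} half of the statement directly.

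\emph{Standard estimator.} Substituting into the distributional form of Proposition~\ref{prop:2}, the first score is $v_1 \sim \mathcal{N}(1, \sigma_1^2) \approx 1$ (effectively deterministic since $\sigma_1 \approx 0$), while each nuisance score is $v_k \sim \mathcal{N}(0, \tau_k^2)$ with $\tau_k^2 = \sigma_k^2 \|\vw\|^2 / \|\vu_k\|^2$, and $v_2, \dots, v_K$ are mutually independent because they come from independently sampled concept activation vectors. I would then write the event ``concept $1$ is ranked most salient'' as $\bigcap_{k=2}^K \{\, |v_k| < |v_1| \,\} \approx \bigcap_{k=2}^K \{\, |v_k| < 1 \,\}$, so by independence its probability factorizes as $\prod_{k=2}^K \Pr(|v_k| < 1) = \prod_{k=2}^K \bigl( 2\Phi(1/\tau_k) - 1 \bigr)$ with $\Phi$ the standard normal CDF. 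Since each factor lies strictly in $(0,1)$, and assuming the nuisance concepts have comparable statistics (e.g.\ a common $\tau_k = \tau$, which holds when $\sigma_k$ and $\|\vu_k\|$ are equal across nuisances), this equals $p^{K-1}$ with $p = 2\Phi(1/\tau) - 1 \in (0,1)$, giving exponential decay in $K$.

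\emph{Main obstacle.} The arithmetic is routine; the parts needing care are (i) fixing the meaning of ``most salient'' --- I would phrase it via $|v_k|$ so the argument does not depend on a sign convention, the signed version being identical up to replacing $2\Phi(1/\tau) - 1$ with $\Phi(1/\tau)$ --- and (ii) making ``exponential decay'' precise, which requires the per-concept probabilities to be bounded away from $1$ uniformly in $k$; the cleanest assumption is that the nuisances are statistically identical, and otherwise one takes $p = \sup_{k \geq 2}\Pr(|v_k| < 1) < 1$ under a mild uniform lower bound on the $\tau_k^2$. Treating $\sigma_1 \approx 0$ as pinning $v_1$ at $1$ is the same approximation already in force in Proposition~\ref{prop:2}.
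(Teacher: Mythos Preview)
Your proposal is correct and follows essentially the same approach as the paper: substitute the special values into the two formulas of Proposition~\ref{prop:2} and bound the success probability of the standard estimator as a product over $K-1$ independent Gaussian tail events. The only cosmetic difference is that the paper uses the signed comparison $v_k<1$ (factor $\Phi(1/\tau_k)$) rather than your $|v_k|<1$ (factor $2\Phi(1/\tau_k)-1$), which you already note; your treatment is actually a bit more careful in spelling out independence and the uniform-bound condition needed for genuine exponential decay.
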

\begin{proof}
    Plugging in the values for the special case of $\vu_1=\vw, \sigma_1\approx 0$ and $\vu_k^T\vw=0, k\geq 2$ in the closed form solution from Proposition~\ref{prop:2}, we have the following results for the standard linear estimator and \ours{}.

    {\bf Solution of standard estimator}. $v_1 = 1$ and $v_k\sim \mathcal{N}(0, \sigma_k^2\frac{\|\vw\|^2}{\|\vu_k\|^2})$ for k$\geq 2$.\newline
    For the first concept to remain the most salient, rest of the K-1 concepts must have an importance score less than 1. Recall that the probability that a random variable z$\sim \mathcal{N}(\mu, \sigma^2)$ less than a value $z_0$ is $\Phi(\frac{z_0-\mu}{\sigma})$ where $\Phi$ is the Cumulative Distribution Function of a standard normal distribution. Therefore the probability that all the K-1 concepts having a value less than 1 is $\prod_{k=2}^K\Phi(\frac{1-0}{\sigma_k\|\vw\|/\|\vu_k\|})=\prod_{k=2}^K\Phi(\frac{\|\vu_k\|}{\sigma_k\|\vw\|})$. Since the probability is a product over K-1 quantities, it decreases exponentially with K.  
    
    {\bf Solution of \ours{}}. $v_1=1, v_2, v_3,\dots=0$ follows directly from plugging in the values in to result of the proposition.
    
\end{proof}
  
\section{Proof of Proposition~\ref{prop:3}}
\label{sec:appendix:proof3}
  The importance score estimated by \ours{} is approximately $\frac{b_1/2}{1+\lambda\sigma^2}, \frac{b_2/2}{1+ \lambda\sigma^2}$, where $\lambda>0$ is a regularizing hyperparameter. 
\begin{proof}
Since $\beta_1, \beta_2$ are normal distributed, the concept activation vectors $\beta_1\vu + (1-\beta_1)\vv$ and $\beta_2\vu + (1-\beta_2)\vv$ are also normally distributed. We derive their closed form below. Recall that $\beta_1\sim \mathcal{N}(b_1, \sigma^2), \beta_2\sim \mathcal{N}(b_2, \sigma^2)$ and that $\vu, \vv$ are orthogonal and unit vectors. 
\begin{align*}
    \mathbb{E}[\beta_1\vu + (1-\beta_1)\vv] &= b_1\vu + (1-b_1)\vv\\
    \mathbb{V}[\beta_1\vu + (1-\beta_1)\vv] &= \mathbb{E}[(\beta_1-b_1)\vu + (b_1-\beta_2)\vv)^2]\\
    & = \mathbb{E}[(\beta_1-b_1)^2] + \mathbb{E}[(\beta_2-b_2)^2]\\
    & = 2\sigma^2
\end{align*}

As argued of the objective to approximate \ours{} in Appendix~\ref{sec:appendix:proof2}, we may obtain the closed form solution for the concept scores $\eta_1, \eta_2$ by solving the following objective. 
\begin{align*}
    \ell = \|\vu - \eta_1(b_1\vu + (1-b_1)\vv) - \eta_2(b_2\vu + (1-b_2)\vv)\|^2 + 2\lambda\sigma^2(\eta_1^2+\eta_2^2)
\end{align*}
By setting $\partial \ell/\partial \eta_1=0$, $\partial \ell/\partial \eta_2=0$, and simplifying, we get the following. 
\begin{align*}
b_1 &= \eta_1(b_1^2 + (1-b_1)^2 + 2\lambda\sigma^2) + \eta_2 (b_1b_2 + (1-b_1)(1-b_2))\\
b_2 &= \eta_2(b_2^2 + (1-b_2)^2 + 2\lambda\sigma^2) + \eta_1 (b_1b_2 + (1-b_1)(1-b_2))
\end{align*}
Eliminating variables, and solving for $\eta_1$, we get
\begin{align*}
    \eta_1 &= \frac{2\lambda b_1\sigma^2 + (1-b_2)(b_1-b_2)}{(b_1-b_2)(b_1+b_2-2b_1b_2) + 2\lambda\sigma^2(2\lambda\sigma^2 + b_1^2 + b_2^2 + (1-b_1)^2+(1-b_2)^2)}
    &=\text{similarly for $\eta_2$}
\end{align*}
By substituting $b_1-b_2\approx 0$, the expressin can be simplified as 
\begin{align*}
    \eta_1 &= \frac{b_1}{2\lambda\sigma^2 + b_1^2 + b_2^2 + (1-b_1)^2+(1-b_2)^2}
\end{align*}
If we now use the assumption that $b_1, b_2\approx 0$, $\therefore 1-b_1, 1-b_2\approx 1$, we get the final form below. 
\begin{align*}
    \eta_1 &= \frac{b_1}{2(1+\lambda\sigma^2)}\\
    \eta_2 &= \frac{b_2}{2(1+\lambda\sigma^2)}
\end{align*}

% We first show that the values of $v_1, v_2$ in closed form are as below before we derive the final result.  
% \begin{align*}
% v_1 &= \frac{\frac{S_1}{S_2}(1-\beta_2)^2}{\frac{S_1}{S_2}(\beta_2^2(1-\beta_1)^2 + \beta_1^2(1-\beta_2)^2) + \lambda(1-\beta_1)(1-\beta_2)}\\
% v_2 &= \frac{\frac{S_1}{S_2}(1-\beta_1)^2}{\frac{S_1}{S_2}(\beta_1^2(1-\beta_2)^2 + \beta_2^2(1-\beta_1)^2) + \lambda(1-\beta_1)(1-\beta_2)}
% \end{align*}
% where $S_1=\sum_i y_1$, $S_2=\sum_i y_i^2$ and $\lambda>0$ is a regularizing hyperparameter. 

% We then observe that if $\vx$ is normally distributed then $y=w^T\vx$ is also normally distributed with the value of $\frac{S_1}{S_2}$ is of the order $\mathcal{O}(1/N)$. Since $\beta_1, \beta_2$ are very close to 0, we can approximate the expression for $v_1$ as below. 
% \begin{align*}
%     v_1&\approx \frac{S_1}{S_2}(1-\beta_2)^2\frac{1}{\lambda(1-\beta_1)(1-\beta_2)} = \mathcal{O}(1/N\lambda)
%     % &\frac{S_1}{S_2}(1-\beta_2)^2\frac{\lambda(1-\beta_1)(1-\beta_2)}{\lambda^2(1-\beta_1)^2(1-\beta_2)^2}
% \end{align*}
% The importance scores: $v_1, v_2$ are solutions to the following problem.
% \begin{align*}
%     \argmin 
% \end{align*}
\end{proof}

{\bf Solution of standard estimator.}\\
\begin{align*}
    &\text{When }c_{1}^{(i)}=(\beta_1u + (1-\beta_1)v)^Tz^{(i)}, \quad c_{2}^{(i)} = (\beta_2u + (1-\beta_2)v)^Tz^{(i)} \\
    &\text{we can derive the value of the label by their scaled difference as shown below}\\
  &\frac{(1 - \beta_2)c_1^{(i)} - (1-\beta_1)c_2^{(i)}}{(1-\beta_2)\beta_1 - (1-\beta_1)\beta_2} = \frac{(1 - \beta_2)c_1^{(i)} - (1-\beta_1)c_2^{(i)}}{\beta_1 - \beta_2} = u^Tz^{(i)} = y^{(i)}\\
  &\implies \frac{1-\beta_2}{\beta_1-\beta_2}c_1^{(i)} + \frac{1-\beta_1}{\beta_1 - \beta_2}c_2^{(i)} = y_i\\
  &\implies v_1 = \frac{1-\beta_2}{\beta_1-\beta_2}, v_2 = \frac{1-\beta_1}{\beta_1 - \beta_2}
\end{align*}

\section{Additional experiment details}
\label{sec:appendix:misc}

\subsection{Simulated Study}
\label{sec:simstudy}
In this section, we consider explaining a two-layer CNN model trained to classify between solid color images with pixel noise as shown in Figure~\ref{fig:four_color}. The colors on the left (red; green) are defined as label 0 and the colors on the right (blue; white) are defined as label 1. 
\begin{wrapfigure}{r}{0.15\textwidth}
% \vspace{-5mm}
    \includegraphics[width=0.99\linewidth]{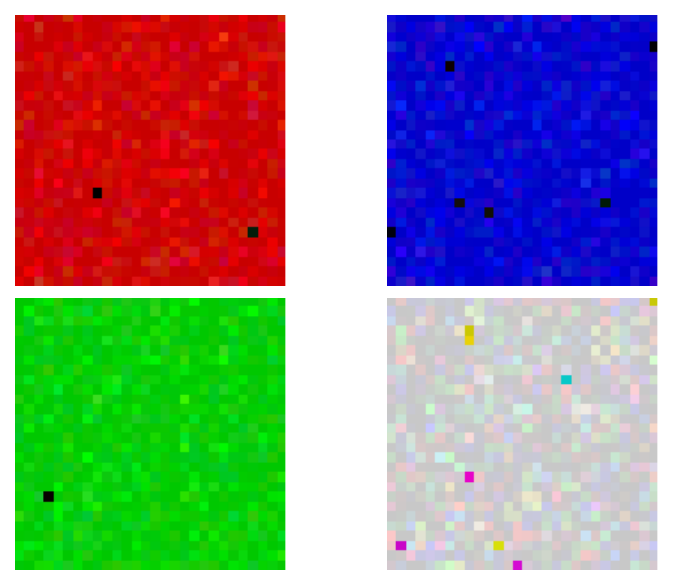}
    \caption{Toy}
    \label{fig:four_color}
    \vspace{-3mm}
  \end{wrapfigure}
The \mex{} is trained on a dataset with equal proportion of all colors; we therefore expect that all constituent colors of a label are equally important for the label. We specify a concept set with the four colors encoded by their literal name {\it red, green, blue, white}. \ours{} (along with others) attribute positive importance for {\it red, green} and negative or zero importance for {\it blue, white} when explaining label 0 using a concept set with only the four task-relevant concepts and when the probe-dataset is the same distribution as the the training dataset. However, quality of explanations quickly degrades when the probe-dataset is shifted, or if the concept set is misspecified. 
% In what follows, we discuss three such scenarios.

\begin{figure}[htb]
    \centering
    \includegraphics[width=0.32\linewidth]{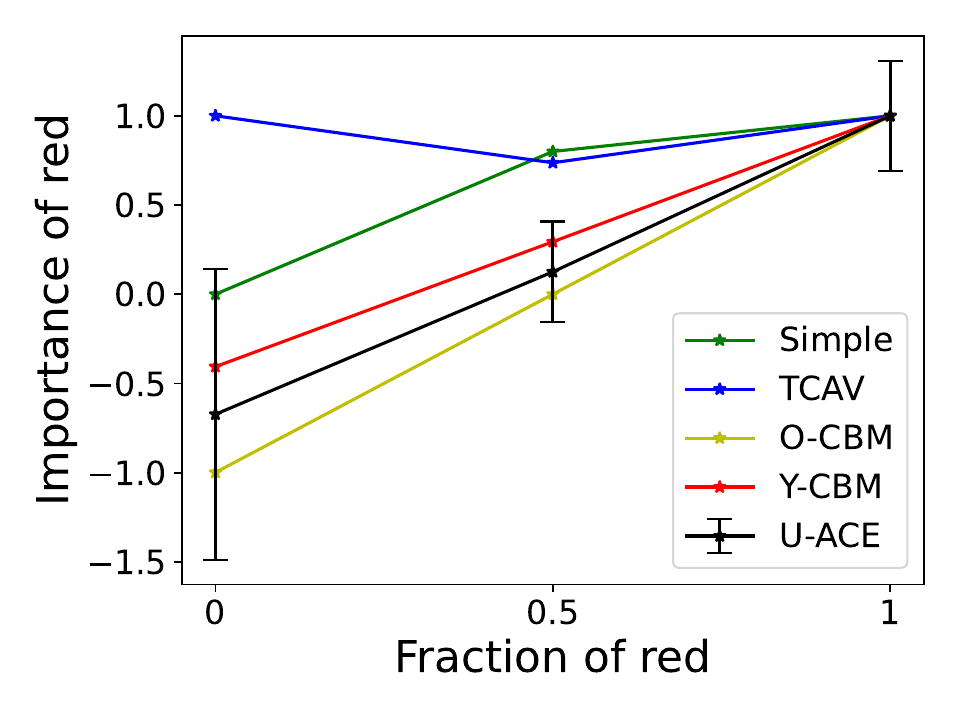}
    \includegraphics[width=0.32\linewidth]{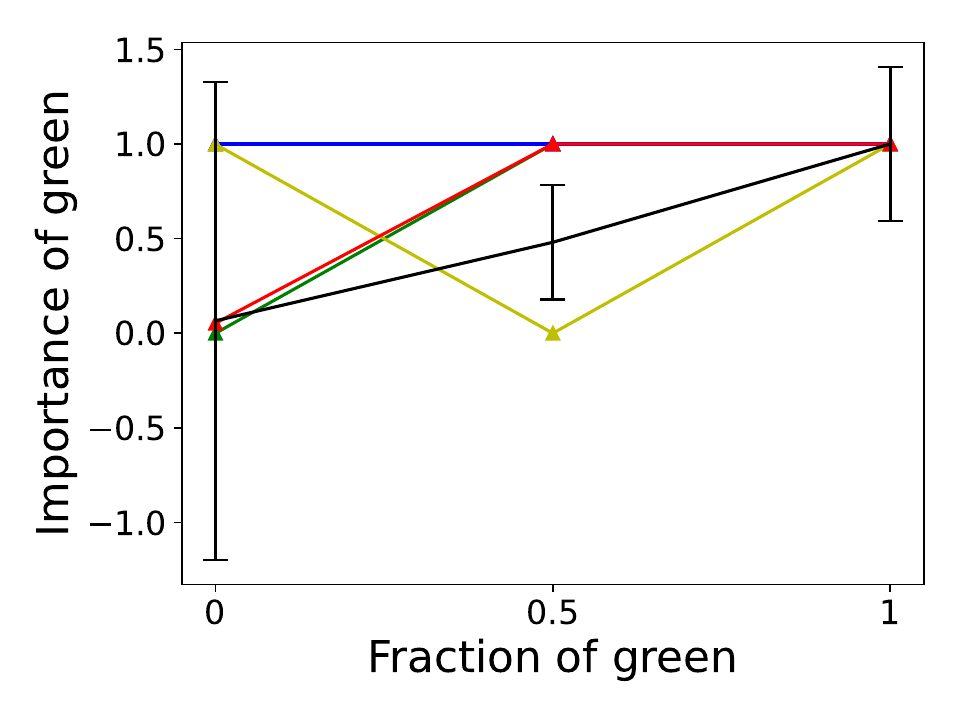}
    \includegraphics[width=0.32\linewidth]{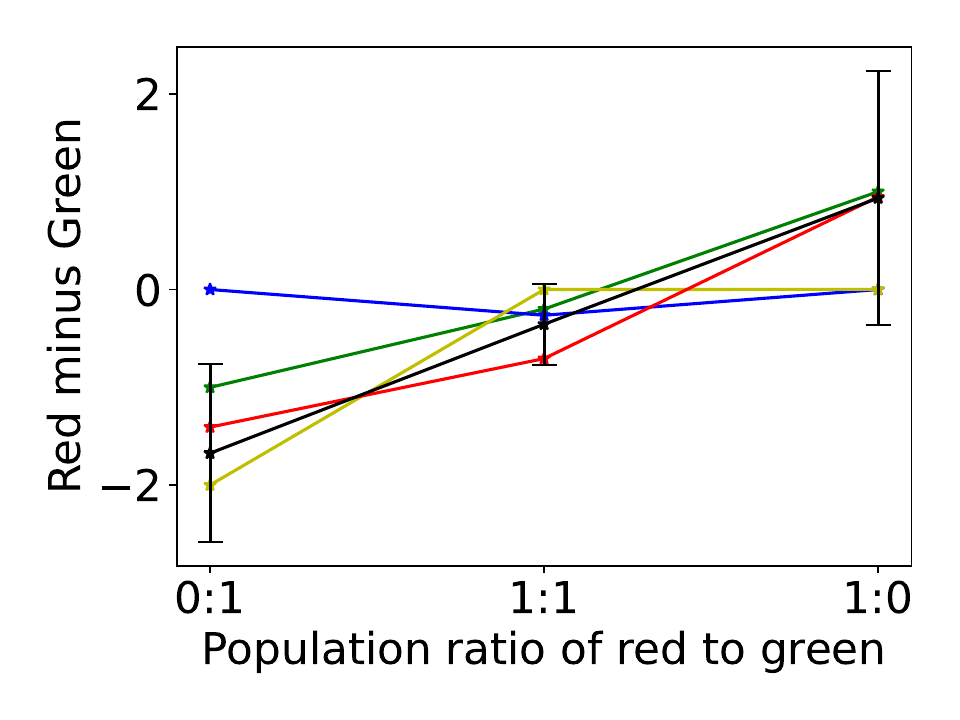}
     \caption{Left, middle plots show the importance of red and green concepts while the rightmost plot shows their importance score difference. \ours{} estimated large uncertainty in importance score when red or green concept is missing from the dataset as seen in the left of the left and middle plots. Also the difference in importance at either extreme in the right plot is not statistically significant. }
    \label{fig:4-color-expls}
\end{figure}

\paragraph{Unreliability due to dataset shift.} We varied the probe-dataset to include varying populations of colors while keeping the concept set and \mex{} fixed. We observed that importance of a concept estimated with standard CBEs varied with the choice of probe-dataset for the same underlying \mex{} as shown in left and middle plots of Figure~\ref{fig:4-color-expls}. Most methods attributed incorrect importance to the {\it red} concept when it is missing (left extreme of left plot), and similarly for the {\it green} concept (left extreme of middle plot). 
Such explanations would have led the user to believe that {\it green} is more important than {\it red}, or {\it red} is more important than {\it green}, depending on the probe-dataset used (as shown in the right most plot). In practice, mis-communicated concept importance could be disastrous, if such importances are used to inform decision making and/or discovery. 
Because \ours{} also informs the user of uncertainty in the estimated importance, we see that the difference in importance scores between the two colors at either extremes is not statistically significant as shown in the rightmost plot. 

% \paragraph{Unreliability due to misspecified concept set.}
% We simulate a over-complete concept set scenario analogous to the settings analyzed in Section~\ref{sec:method:analysis} and empirically confirm the merits of \ours{}. Appendix~\ref{appendix:undercomplete} presents and evaluates on an under-complete concept setting.
% \begin{wrapfigure}{r}{0.4\textwidth}
\begin{figure}[htb]
\centering
\vspace{-4mm}
  \includegraphics[width=0.5\linewidth]{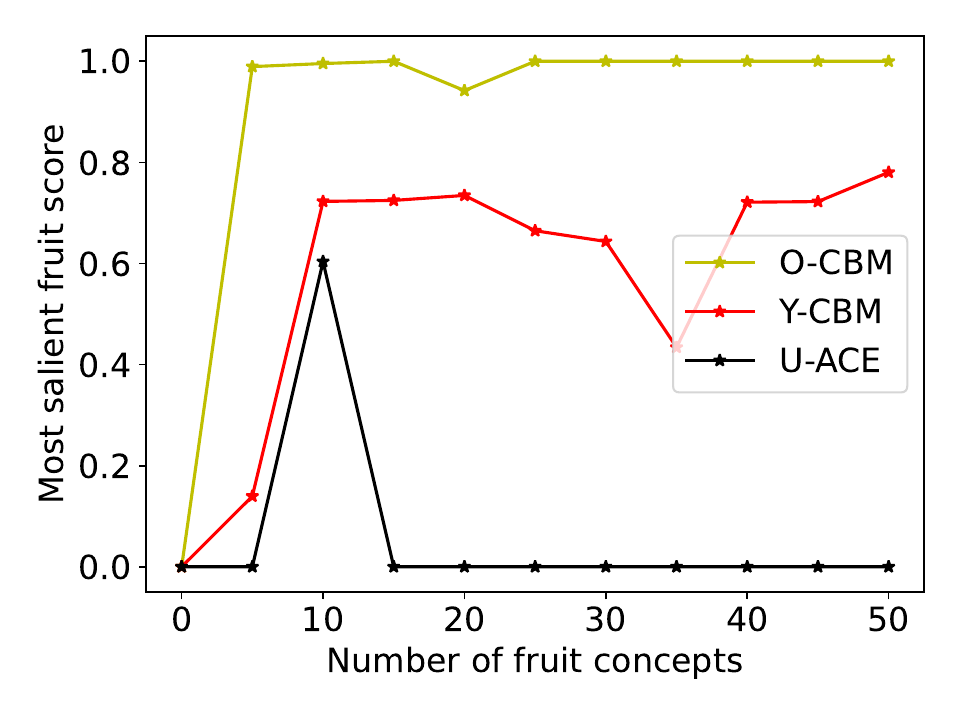}
  \caption{\ours{} is reliable even with overly complete concept set.}
  \label{fig:4-color-motive}
\end{figure}
{\bf Over-complete concept set}. 
We now evaluate the quality of explanations when the concept set is misspecified. We induce mispecification by making the concept set over-complete by gradually expanding it to include common fruit names 
(Appendix~\ref{sec:appendix:misc} contains the full list), which are clearly irrelevant to the task. We obtain the explanations using an in-distribution probe-dataset that contains all colors in equal proportion. Figure~\ref{fig:4-color-motive} shows the score of most salient fruit concept with increasing number of fruit (nuisance) concepts on X-axis. We observe that \ours{} is far more robust to the presence of nuisance concepts. Robustness to irrelevant concepts is important because it allows the user to begin with a superfluous set of concepts and find their relevance to \mex{}, instead of requiring users to guess relevant concepts, which is ironically the very purpose of using concept explanations.

{\bf Under-complete concept set}. 
We now generate concept explanations with concepts set to \{{\it ``red or blue'', ``blue or red'', ``green or blue'', ``blue or green''}\}. The concept {\it ``red or blue''} is expected to be active for both {\it red} or {\it blue} colors, similarly for {\it ``blue or red''} concept. 
% \jh{Confusing, because it seems you use these four concepts in a set. Better to say: only two concepts chosen from \{{`red', `blue', `green'}\}} 
Since all the concepts contain a color from each label, i.e. are active for both the labels, none of them must be useful for prediction. 
% \jh{This sentence is not clear.} 
Yet, the importance scores  estimated by Y-CBM and O-CBM shown in the Figure~\ref{fig:4-color-motive2} table attribute significant importance. \ours{} avoids this problem as explained in Section~\ref{sec:method:analysis} and attributes almost zero importance. 

\begin{table}[htb]
\centering
  \begin{tabular}{c|r|r|r}
    Concept & Y-CBM & O-CBM & \ours{} \\\hline
    red or blue & -75.4 & -1.8 & 0.1 \\
    blue or red & 21.9 & -1.9 & 0 \\
    green or blue & -1.4 & 1.6 & 0 \\
    blue or green & -23.1 & 1.6 & 0 \\
    % Accuracy & 79\% & 100\% & 52\% 
  \end{tabular}
 \caption{When the concept set is under-complete and contains only nuisance concepts, their estimated importance score must be 0.}
  \label{fig:4-color-motive2}
\end{table}

\paragraph{List of fruit concepts from Section~\ref{sec:simstudy}.}
\begin{verbatim}
   apple, apricot, avocado, banana, blackberry, blueberry, cantaloupe,
   cherry, coconut, cranberry, cucumber, currant, date, dragonfruit,
   durian, elderberry, fig, grape, grapefruit, guava, honeydew, kiwi,
   lemon, lime, loquat, lychee, mandarin orange, mango, melon, nectarine,
   orange, papaya, passion fruit, peach, pear, persimmon, pineapple, plum,
   pomegranate, pomelo, prune, quince, raspberry, rhubarb, star fruit,
   strawberry, tangerine, tomato, watermelon 
\end{verbatim}

\paragraph{List of animal concepts from Section~\ref{sec:expt:stl}.}
\begin{verbatim}
    lion, tiger, giraffe, zebra, monkey, bear, wolf, fox, dog, cat, 
    horse, cow, pig, sheep, goat, deer, rabbit, raccoon, squirrel, mouse,
    rat, snake, crocodile, alligator, turtle, tortoise, lizard, 
    chameleon, iguana, komodo dragon, frog, toad, turtle, tortoise, 
    leopard, cheetah, jaguar, hyena, wildebeest, gnu, bison, antelope, 
    gazelle, gemsbok, oryx, warthog, hippopotamus, rhinoceros, elephant 
    seal, polar bear, penguin, flamingo, ostrich, emu, cassowary, kiwi, 
    koala, wombat, platypus, echidna, elephant
\end{verbatim}

\paragraph{Concepts used for {\it car} and {\it plane} from Section~\ref{sec:expt:stl}}
\begin{verbatim}
    car: headlights, taillights, turn signals, windshield, windshield vipers, 
         bumpers, wheels
    plane: wings, landing gear, sky
\end{verbatim}
\paragraph{Scene labels considered in Section~\ref{sec:expt:real}.}
\begin{verbatim}
    /a/arena/hockey, /a/auto_showroom, /b/bedroom, /c/conference_room, /c/corn_field
    /h/hardware_store, /l/legislative_chamber, /t/tree_farm, /c/coast, 
    /p/parking_lot, /p/pasture, /p/patio, /f/farm, /p/playground, /f/field/wild
    /p/playroom, /f/forest_path, /g/garage/indoor
    /g/garage/outdoor, /r/runway, /h/harbor, /h/highway
    /b/beach, /h/home_office, /h/home_theater, /s/slum, 
    /b/berth, /s/stable, /b/boat_deck, /b/bow_window/indoor, 
    /s/street, /s/subway_station/platform, /b/bus_station/indoor, /t/television_room,
    /k/kennel/outdoor, /c/campsite, /l/lawn, /t/tundra, /l/living_room, 
    /l/loading_dock, /m/marsh, /w/waiting_room, /c/computer_room, 
    /w/watering_hole, /y/yard, /n/nursery, /o/office, /d/dining_room, /d/dorm_room,
    /d/driveway
\end{verbatim}

\subsection{Further details for Imagenet experiment of Section~\ref{expt:simagenet}}
\label{appendix:simagenet}
Table~\ref{tab:simagenet:spurious} contains the list of labels we considered for evaluation on the Imagenet dataset of Section~\ref{expt:simagenet}.

Table~\ref{tab:simagenet_examples:more} extends with more results the Table~\ref{tab:simagenet_example} of the main content. 
\begin{table}[htb]
    \centering
    \begin{tabular}{c|l|l}
    \hline
    Index & Name & Known spurious features\\\hline
    1 & dogsled & \texttt{snow, dog, tree, trees, husky}\\
2 & howler monkey & \texttt{trunk, green, branches, branch, vegetation}\\
3 & seat belt, seatbelt & \texttt{passenger, window, sunglasses, van}\\
4 & ski & \texttt{tree, trees, snow, mountain, person, sunglasses}\\
5 & volleyball & \texttt{sand, players, player, setter, scoreboard, net}\\
6 & boathouse & \texttt{lake, water, dock, boat, shore}\\
7 & bee & \texttt{flower, daisy, petals}\\
8 & plate & \texttt{food, table, dining}\\
9 & barracouta, snoek & \texttt{person, face, hands, hand, sunglasses, cap}\\
10 & llama & \texttt{hay, grass, green, greens}\\
11 & rhinoceros beetle & \texttt{hand, head, palm, person, fingers}\\
12 & dowitcher & \texttt{water, reflection, lake, shoal, sandbar}\\
13 & white wolf, Arctic wolf& \texttt{net, fence, fencing}\\
14 & dragonfly & \texttt{blurry, green, flower, plant}\\
15 & gorilla, Gorilla gorilla & \texttt{green, tree, grass, trunk}\\
16 & shovel & \texttt{snow}\\
17 & doormat, welcome mat & \texttt{door}\\
18 & ruddy turnstone, Arenaria interpres & \texttt{mud, shore, sand, land, seashore, beach}\\
19 & albatross, mollymawk & \texttt{water, sea, ocean}\\
20 & sax, saxophone & \texttt{player, players, playing}\\
21 & balance beam, beam & \texttt{player, person, sport, arms, legs}\\
22 & bathing cap, swimming cap & \texttt{face, chest, person, swimmer, diver, gymnast}\\
23 & puck, hockey puck & \texttt{player, bat, ice, arena}\\
24 & dining table, board & \texttt{chairs, chair, corner}\\
25 & rugby ball & \texttt{ground, green, players, player}\\
26 & dock, dockage, docking facility & \texttt{boat, boats, ship, yacht, water, sea, lake}\\
27 & padlock & \texttt{chain, chains, door}\\
28 & potter's wheel & \texttt{hands, hand, person, face, head}\\
29 & ping-pong ball & \texttt{player, human, hands, arms, arm}\\
30 & paddle, boat paddle & \texttt{human, arm, arms, body, lifevest, water}\\
31 & unicycle, monocycle & \texttt{road, body, human, arms, face, arm}\\
32 & ice lolly, lolly, lollipop, popsicle & \texttt{mouth, eyes, face, human, head, hand, lip, lips}\\
33 & beaver & \texttt{water, lake, waterbody}\\
34 & mountain tent & \texttt{mountain, lake, water, hill, snow}\\
35 & indri, indris, Indri indri, Indri brevicaudatus & \texttt{sky, tree, leaf, leaves, trunk, vegetation, green}\\
36 & seashore, coast, seacoast, sea-coast & \texttt{seawater, ocean, water, sea}\\
37 & sunglass & \texttt{cheeks, face, head, person, nose}\\
38 & bulbul & \texttt{branch, tree, leaves, sky, leaf}\\
39 & alp & \texttt{sky, clouds, blue}\\
40 & Arabian camel & \texttt{desert, sand, hot, water, ground}\\\hline
    \end{tabular}
    \caption{Classes selected for evaluation in Section~\ref{expt:simagenet} and known spurious features in the last column. The known spurious features are obtained from \href{https://salient-imagenet.cs.umd.edu/}{Salient-Imagenet}}
    \label{tab:simagenet:spurious}
\end{table}

\begin{table}[htb]
    \centering
    \begin{tabular}{l|c|l}
        volley ball & Y-CBM & \texttt{setter, ball, \color{red}{airship}}\\
         &O-CBM & \texttt{setter, ball, \color{red}{clay}}\\
         &\ours{} & \texttt{sports, setter, ball}\\\hline
        Gorilla & Y-CBM & \texttt{animals, siamang, \color{red}{mower}} \\
         & O-CBM & \texttt{chimp, black, \color{red}{mastiff}} \\
         & \ours{} & \texttt{siamang, gibbon, macaque} \\\hline
        Rugby & Y-CBM & \texttt{rugby, knee, \color{red}{missile}} \\
        \hspace{3mm}ball & O-CBM & \texttt{rugby, soccer, \color{red}{bulldog}} \\
         & \ours{} & \texttt{rugby, sports, player} \\\hline
        hockey & Y-CBM & \texttt{hockey, player, \color{red}{nail}} \\
        \hspace{3mm}puck & O-CBM & \texttt{hockey, \color{red}{shaft, heater}} \\
         & \ours{} & \texttt{hockey, sports, \color{red}{gandola}} \\\hline
        swimming & Y-CBM & \texttt{hat, \color{red}{regions, bandaid}} \\
        \hspace{3mm}cap & O-CBM & \texttt{swimming, \color{red}{bald, meets}} \\
         & \ours{} & \texttt{swimming, sports, snorkel} \\\hline
    \end{tabular}
    \caption{More results extending the Table~\ref{tab:simagenet_example}}
    \label{tab:simagenet_examples:more}
\end{table}

\subsection{Addition results for Section~\ref{sec:expt:real}}
We report also the tau~\citep{wiki:Kendall_tau_distance} distance from concept explanations computed by \simple{} as a measure of explanation quality in Table~\ref{tab:expte}. Kendall Tau is a standard measure for measuring distance between two ranked lists. It does so my computing number of pairs with reversed order between any two lists. Since \simple{} can only estimate the importance of concepts that are correctly annotated in the dataset, we restrict the comparison to only over concepts that are attributed non-zero importance by \simple{}. 

\begin{table}[htb]
    \centering
    \begin{tabular}{c|r|r|r|r}
    Dataset$\downarrow$ & TCAV & O-CBM & Y-CBM & \ours{}\\\hline
       ADE20K  & 0.36  & 0.48  & 0.48 & {\bf 0.34} \\
       PASCAL  & 0.46  & 0.52  & 0.52 & {\bf 0.32}
    \end{tabular}
    \caption{{\it Quality of explanation comparison.} Kendall Tau Distance between concept importance rankings computed using different explanation methods shown in the first row with ground-truth. The ranking distance is averaged over twenty labels. \ours{} is better than both Y-CBM and O-CBM as well as TCAV despite not having access to ground-truth concept annotations.}
    \label{tab:expte}
\end{table}

% Implementation details

\section{Ablation Study}
\label{sec:appendix:ablation}
\subsection{Uncertainty of Concept Activations}
\label{sec:appendix:uncert}
As explained in Section~\ref{sec:method:noise}, we estimate the uncertainty on concept activations using a measure on predictability of the concept as shown in Proposition~\ref{prop:1}. In this section we evaluate the quality of estimated uncertainty and compare with other (simpler) variants of uncertainty estimation. More crisply, we ask the following question.\newline 
{\it Why not estimate uncertainty using any other uncertainty quantification method?}

\paragraph{We will now introduce two new variants of estimating uncertainty.}
Alongside the measure of uncertainty of \ours{} discussed in Section~\ref{sec:method:noise}, which we denote with $\epsilon$, we will present two alternate ways of measuring uncertainty below. \newline
{\bf MC Sampling.} We may simply repeat the estimation procedure several times (denote by S) with different seed and data split to sample multiple concept activation vectors: $\{a_k^{(1)}, a_k^{(2)}, \dots , a_k^{(S)}\}\quad k\in [1, K]$. 
We empirically estimate per-concept uncertainty by averaging over examples: $\epsilon^{MC}=\mathbb{E}_{\vx\in \mathcal{D}}[V([\vx^T a_k^{(1)}, \vx^T a_k^{(2)}, \dots, \vx^T a_k^{(S)}])]$. Where $V(\bullet)$ is the sample variance: $V(b_1, b_2, \dots, b_S)=\frac{\sum_s (b_s-\frac{\sum_s b_s}{S})^2}{S-1}$. We simply repeated the estimation procedure of \citet{Oikarinen2023} that is summarized in Section~\ref{sec:method:noise} multiple times with different seed and data split to sample different activation vectors. 

{\bf Distribution Fit}. Inspired by ProbCBM proposed in \citet{KimJung23}, we estimate uncertainty from the data as a learnable parameter through distribution fitting. We assume a normal distrbution of the noise and model the standard deviation as a linear projection of the feature vector. The model is summarized below. 
\begin{align*}
    g(\vx)^Tg_{text}(T_k) \sim \mathcal{N}(\mu_k(\vx), \sigma_k^2(\vx))\\
    \mu_k(\vx) = \vec{p_k}^Tf(\vx), \sigma_k(\vx) = \vec{q_k}^Tf(\vx)
\end{align*}
We obtain the observed score of a concept given an example: $\vx$ using the multi-modal model: $g, g_{text}$ and the text description of the $k^{th}$ concept $T_k$. The concept score is modeled to be distributed by a normal distribution whose mean and standard deviation are linear functions of the feature representation of the \mex{}: $f^{[-1]}(\vx)$. We optimize the value for $[\vec{p_1}, \vec{p_2},\dots, \vec{p_K}], [\vec{q_1}, \vec{q_2},\dots,\vec{q_K}]$ through gradient descent on the objective $\mathcal{\beta} = MLL(\mathcal{D}, g) + \beta \times\mathbb{E}_\mathcal{D}\mathbb{E}_k[KL(\mathcal{N}(0, I)\| \mathcal{N}(\mu_k(\vx), \sigma_k(\vx)))]$ very similar to the proposal of \citet{KimJung23}. We picked the best value of $\beta$ and obtained $\epsilon^{DF}$ by averaging over all the examples: $\epsilon^{DF}=\mathbb{E}_{\mathcal{D}}[\sigma_k(\vx)]$.

\paragraph{Evaluation of different uncertainties.}
We conduct our study using the ResNet-18 model pretrained on Places365 and Pascal dataset that were discussed in Section~\ref{sec:expt:real}. We use human-provided concept annotations to train per-concept linear classifier on the representation layer. We retained only 215 concepts of the total 720 concepts that have at least two positive examples in the dataset. We then evaluated the per-concept linear classifier on a held-out test set to obtain macro-averaged accuracy. The concepts with poor accuracy are the ones that cannot be classified linearly using the representations. Therefore the error rate per concept is the ground-truth for uncertainty that we wish to quantify. 

We may now evaluate the goodness of uncertainty: $\epsilon$ of \ours{} by comparing it with ground-truth (error-rate); observe they are both K-dimensional vectors. We report two measures of similarity in Table~\ref{tab:uncert}: (1) Cosine-Similarity (Cos-Sim) between $\epsilon$ and error-rate, (2) Jaccard Similarity (JS) (\url{https://en.wikipedia.org/wiki/Jaccard_index}) between top-k least uncertain concepts identified using error-rate and $\epsilon$.  
For any two vectors $u, v$, and their top-k sets: $S_1(u), S_2(v)$, the Cos-Sim and JS are evaluated as follows. 
\begin{align*}
    &\text{Cos-Sim}(u, v) = \frac{u^Tv}{\|u\|\|v\|}\\
    &JS(S_1(u), S_2(v)) = \frac{|S_1(u)\cap S_2(v)|}{|S_1(u)\cup S_2(v)|}
\end{align*}

{\bf Evaluation of epistemic uncertainty.} We compared the estimate of uncertainty obtained through MC sampling ($\epsilon^{MC}$ with hundred samples) and Distribution Fitting ($\epsilon^{DF}$) with $\epsilon$ of \ours{} in Table~\ref{tab:uncert}. We observe that uncertainty obtained using distributional fitting is decent without incurring huge computational cost, however \ours{} produced the highest quality uncertainty at the same or slightly lower computational cost of distributional fitting.  

\begin{table}[htb]
    \centering
    \begin{tabular}{|c|r|r|r|r|}
        \hline
         Method & Cos-Sim & Top-10 & Top-40 & Top-80 \\\hline
         MC Sampling & -0.13 & 0 & 0.08 & 0.21\\
         Distribution Fit & 0.06 & 0.11 & 0.19 & 0.31 \\
         U-ACE & {\bf 0.36} & 0.11 & {\bf 0.29} & {\bf 0.36}\\\hline
    \end{tabular}
    \caption{Evaluation of uncertainties estimated using U-ACE, MC sampling and Distribution Fit (see text for their description). Cos-Sim is the cosine-similarity with ground-truth value of uncertainty. The next three columns show Jaccard similarity between the top-k concepts ranked by ground-truth uncertainty and each of the three methods. Higher the better for all the values. }
    \label{tab:uncert}
\end{table}

{\bf Evaluation of uncertainty due to ambiguity.} The results so far have confirmed the merits of \ours{} over the other two in modelling the uncertainty due to lack of information. In Figures~\ref{fig:amb:1},\ref{fig:amb:2},\ref{fig:amb:3},\ref{fig:amb:4}, we present anecdotal evidence that \ours{} is very effective at modelling uncertainty due to ambiguity. In each figure, we compare most (first two columns) and least (last two columns) uncertain images identified by Distribution Fit (in the first row) and \ours{} in the second row. 

\begin{figure}[htb]
    \centering
    \includegraphics[width=0.95\textwidth]{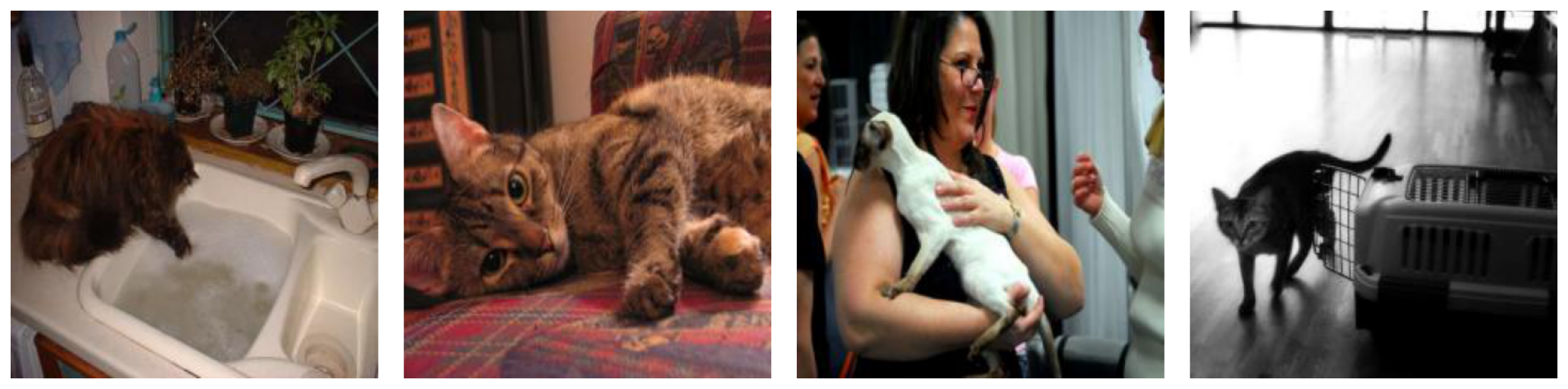}\\
    \includegraphics[width=0.95\textwidth]{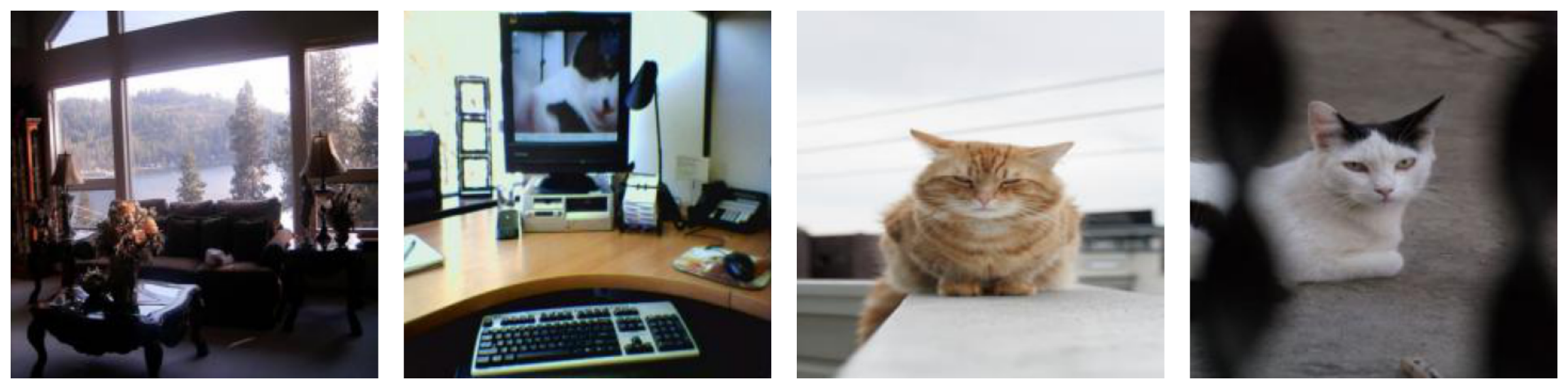}
    \caption{Comparison of ambiguity ranking for {\bf Cat} with Distr. Fit in the top row and \ours{} in the bottom row. Most uncertainty (due to ambiguity) on the left to least uncertainty on the right.}
    \label{fig:amb:1}
\end{figure}

\begin{figure}[htb]
    \centering
    \includegraphics[width=0.95\textwidth]{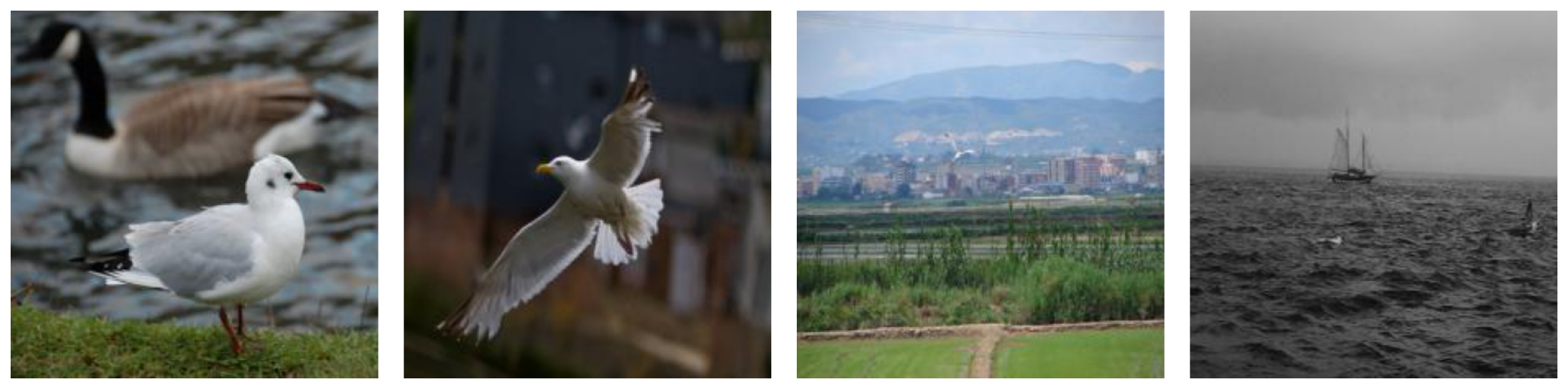}\\
    \includegraphics[width=0.95\textwidth]{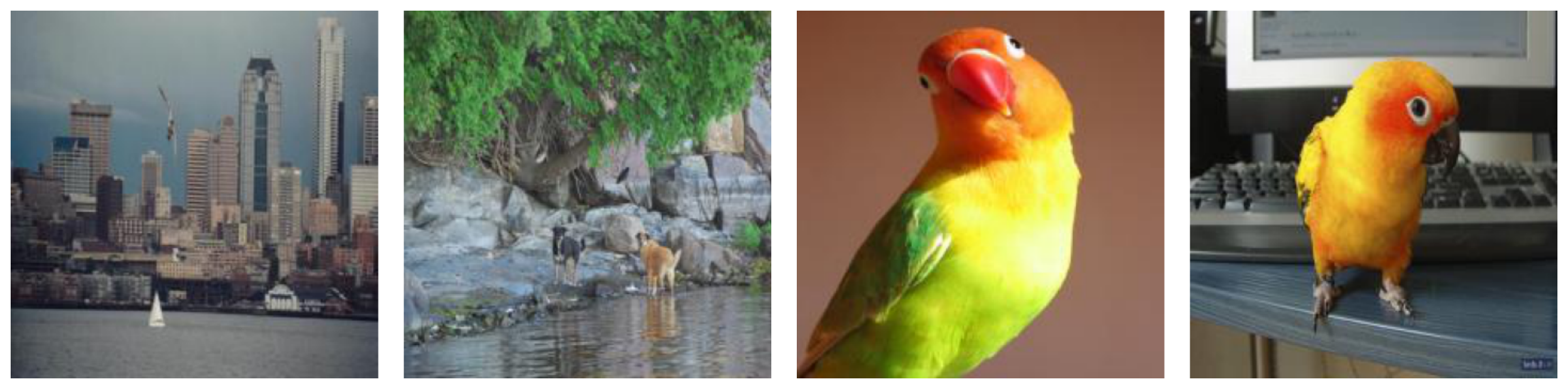}
    \caption{Comparison of ambiguity ranking for {\bf Bird} with Distr. Fit in the top row and \ours{} in the bottom row. Most uncertainty (due to ambiguity) on the left to least uncertainty on the right.}
    \label{fig:amb:2}
\end{figure}

\begin{figure}[htb]
    \centering
    \includegraphics[width=0.95\textwidth]{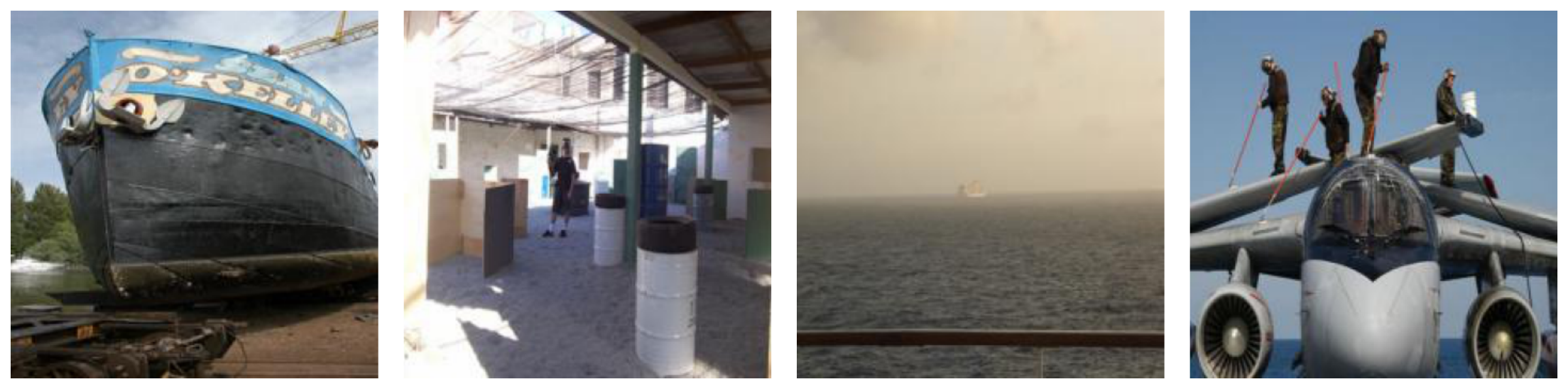}\\
    \includegraphics[width=0.95\textwidth]{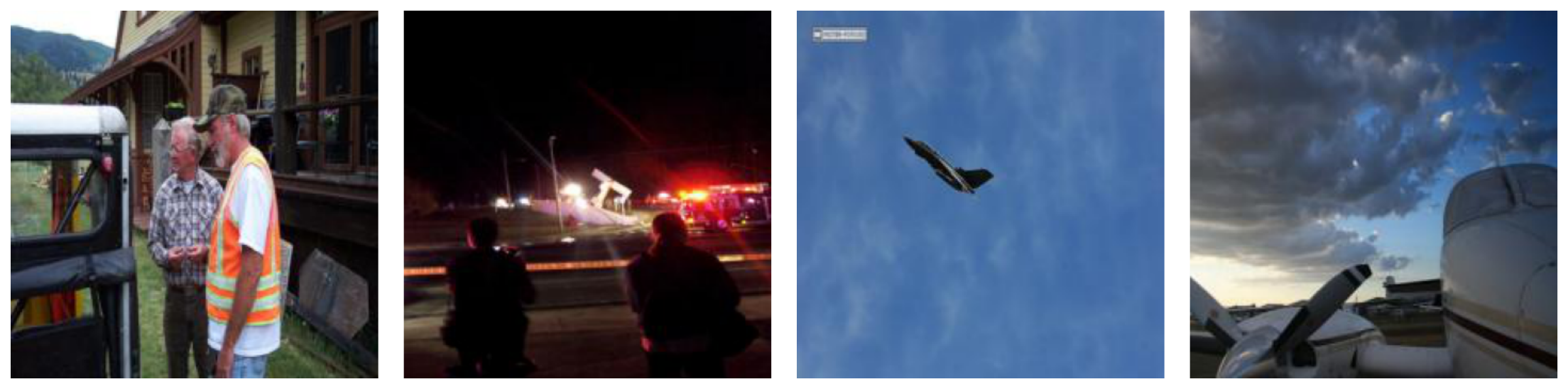}
    \caption{Comparison of ambiguity ranking for {\bf Sky} with Distr. Fit in the top row and \ours{} in the bottom row. Most uncertainty (due to ambiguity) on the left to least uncertainty on the right.}
    \label{fig:amb:3}
\end{figure}

\begin{figure}[htb]
    \centering
    \includegraphics[width=0.95\textwidth]{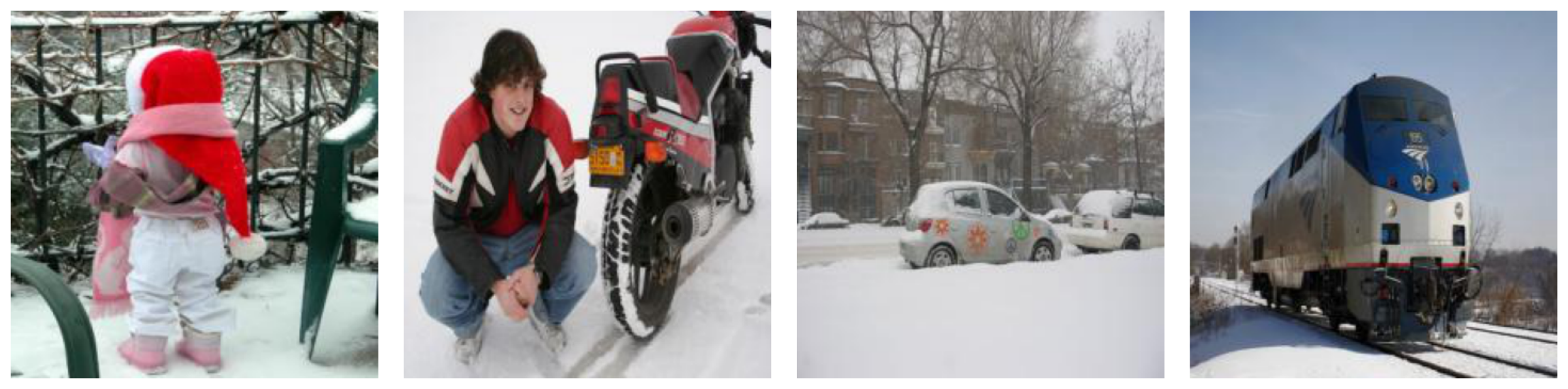}\\
    \includegraphics[width=0.95\textwidth]{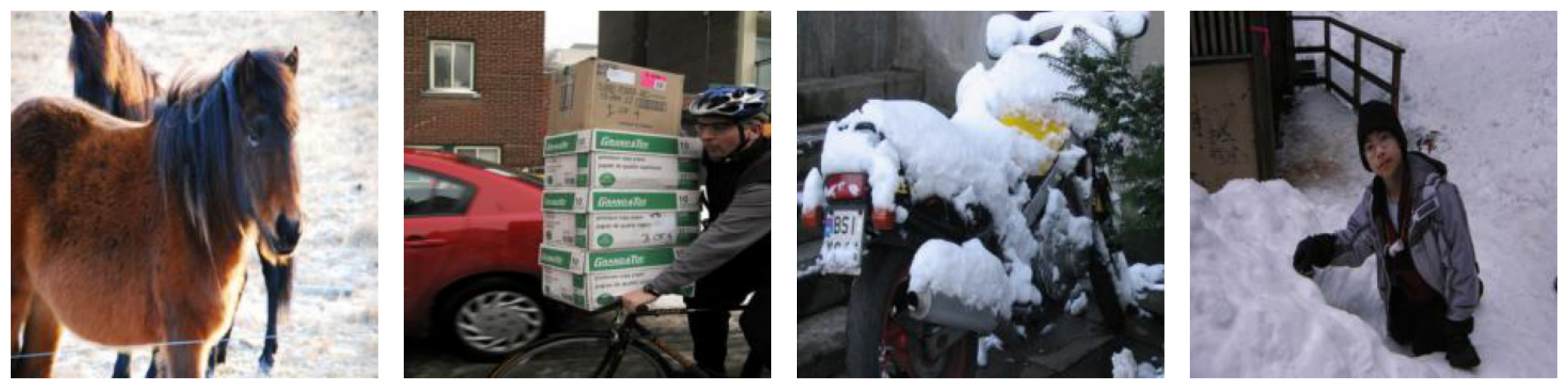}
    \caption{Comparison of ambiguity ranking for {\bf Snow} with Distr. Fit in the top row and \ours{} in the bottom row. Most uncertainty (due to ambiguity) on the left to least uncertainty on the right.}
    \label{fig:amb:4}
\end{figure}

\subsection{Bayesian Estimation and Significance of Prior}
The focus of this section is to motivate the uncertainty-aware prior used by \ours{}. More crisply, the subject of this section is to answer the following question. \newline
{\it What is the role of prior in~\ours{}, and what happens without it?}

We replicate the study on Broden dataset of Section~\ref{sec:expt:real} of Table~\ref{tab:real:expls} with two new baselines. We replace the linear model estimation of \ours{} described in Section~\ref{sec:method} with an out-of-the-box Bayesian Regression estimator available from \texttt{sklearn}\footnote{\url{https://scikit-learn.org/stable/modules/generated/sklearn.linear_model.BayesianRidge.html}}, which we refer as Bayes Regr. Effectively, Bayes Regr. is different from \ours{} only in the prior. We also compare with the estimation of fitting using Bayes Regr. but when the input is perturbed with the noise estimated by \ours{}. We refer to this baseline as Bayes Regr. with MC. 

Table~\ref{tab:prior} contrasts the two methods that differ majorly only on the choice of prior with \ours{}. We observe a drastic reduction in the quality of explanations by dropping the prior.   

{\it Can we trivially fix TCAV with a simple uncertainty estimate?}\newline
TCAV is already equipped with a simple uncertainty measurement to distinguish a truly important concept from a random concept. TCAV computes $\vec{m(x)}$ and $\vec{s(x)}$ of concept activations by simply training multiple concept activation vectors. Yet, TCAV estimated explanations are noisy as seen in Table 1 and in the top-10 salient concepts shown below. The poor quality of TCAV explanations despite employing uncertainty (although in a limited capacity) is likely because simple measurement of uncertainty through MC sampling is not the best method for estimating uncertainty as shown in Table~\ref{tab:uncert}.

\begin{table}[htb]
    \centering
    \begin{tabular}{|c|c|c|c|}
        \hline
         Dataset & Bayes Regr. &  Bayes Regr. with MC & \ours{}  \\\hline
         ADE20K & 0.39 & 0.43 & {\bf 0.09} \\ 
         Pascal & 0.40 & 0.45 & {\bf 0.11} \\\hline
    \end{tabular}
    \caption{Significance of prior: quality of explanations severely degrades without the uncertainty-aware prior.}
    \label{tab:prior}
\end{table}

The tables below give a detailed view of the top-10 salient concepts identified using ADE20K for the ResNet-18 scene classification model. The problematic or outlandish concepts are marked in red. We observe that although Bayesian Regr. and Y-CBM are practically the same, the choice of the estimator and sparsity seems to have helped Y-CBM produce (seemingly) higher quality explanations. 

Label: {\bf Tree Farm} (ADE20K)

\begin{tabular}{c|c}
    TCAV & \makecell{\texttt{{\color{red} palm}, horse, {\color{red} pane of glass}, {\color{red} helicopter},}\\ \texttt{rubbish, {\color{red} cap, boat, organ}, tent, footbridge}}\\\hline
    \makecell{Bayes Regr.\\with MC sampling} & \makecell{\texttt{{\color{red} net, merchandise, labyrinth, black, big top, ottoman, chest},}\\ \texttt{pigeonhole, tree, sky}}\\\hline
    Bayes Regr. & \makecell{\texttt{oar, forest, pigeonhole, {\color{red} merchandise}, sand trap,}\\ \texttt{net, {\color{red} wallpaper, tray, calendar}, tree}}\\\hline
    O-CBM & \makecell{\texttt{forest, pot, pottedplant, hedge, trestle, {\color{red} sweater},}\\ \texttt{bush, leaf, foliage, {\color{red} coat}}}\\\hline
    Y-CBM & \makecell{\texttt{field, forest, foliage, {\color{red} elevator}, gravestone,}\\\texttt{hedge, bush, vineyard, covered bridge, {\color{red} baptismal font}}}\\\hline
    \ours{}& \makecell{\texttt{foliage, forest, grass, field, hedge, covered bridge,}\\\texttt{tree, leaves, bush, gravestone}}
\end{tabular}

Label: {\bf Coast} (ADE20K)

\begin{tabular}{c|r}
     TCAV & \makecell{\texttt{{\color{red} shutter, manhole, baby buggy}, umbrella, sand, boat}, \\\texttt{arch, minibike, rubbish, {\color{red} column}}} \\\hline
    \makecell{Bayes Regr.\\with MC sampling} & \makecell{\texttt{{\color{red} wineglass, guitar, headlight, chest, jersey, roundabout,}}\\\texttt{\color{red} witness stand, magazine, folding door, shaft}}\\\hline
    Bayes Regr. & \makecell{\texttt{lake, {\color{red} headlight}, island, {\color{red} hen},}\\ \texttt{dog, {\color{red} chest, jersey, mosque, shaft, windshield}}}\\\hline
    O-CBM & \makecell{\texttt{sea, island, lighthouse, cliff, wave, shore,}\\ \texttt{rock, sand, {\color{red} pitted, crystalline}}}\\\hline
    Y-CBM & \makecell{\texttt{sea, sand, lake, island, {\color{red} runway}, cliff,}\\ \texttt{fog bank, clouds, {\color{red} towel rack}, pier}}\\\hline
    \ours{} & \makecell{\texttt{sea, lake, island, pier, cliff, lighthouse,}\\ \texttt{shore, fog bank, water, sand}}\\\hline
\end{tabular}

\subsection{Effect of Regularization Strength on Y-CBM and O-CBM}
We present the sensitivity analysis for the two strong baselines: Y-CBM and O-CBM in this section.

{\it How are the hyperparams tuned?}\newline
Hyperparameter tuning is tricky for concept explanations since they lack a ground-truth or validation set. The reported results for Y-CBM and O-CBM in the main paper used the default value of the regularization strength of the corresponding estimator, which is \texttt{Lasso}\footnote{\url{https://scikit-learn.org/stable/modules/generated/sklearn.linear_model.Lasso.html}} for Y-CBM and \texttt{LogisticRegression}\footnote{\url{https://scikit-learn.org/stable/modules/generated/sklearn.linear_model.LogisticRegression.html}} for O-CBM. Both the estimators are part of \texttt{sklearn}. We had to reduce the default regularization strength of Y-CBM to $\alpha=10^{-3}$ so that estimated weights are not all 0. The $\kappa$ of \ours{} is somewhat arbitrarily set to $0.02$ on Broden dataset for sparse explanation with non-zero weight for only 20-30\% of the concepts. 

{\it Can Y-CBM and O-CBM do much better if we tune the regularization strength?}\newline
We present the results of the two baselines for various values of the regularization strength in Table~\ref{tab:hparam}. The table shows quality of explanations in the first two rows for the same setup as Table~\ref{tab:real:expls} and also shows the measure of drift in explanations like in Table~\ref{tab:real:shift}. We tried C=\texttt{1e-2, 0.1, 1, 10} for O-CBM and $\alpha=$\texttt{1e-4, 1e-3, 1e-2, 1e-1} for Y-CBM. We dropped C=1e-2 and $\alpha=$\texttt{1e-2, 1e-1} from the table because then the exaplanations were overly sparsified to zero. 

We observe from the table that \ours{} still is the best method that that yields high-quality explanation while also being less sensitive to shift in the probe-dataset. 

\begin{table}[htb]
    \centering
    \begin{tabular}{|c|r|r|r|r|r|r|}
    \hline
    Dataset & \multicolumn{3}{|c|}{O-CBM} & \multicolumn{2}{|c|}{Y-CBM} & \ours{} \\\hline
    Regularization strength $\rightarrow$ & C=0.1 & C=1 & C=10 & $\alpha$=$10^{-4}$ & $\alpha$=$10^{-3}$ & $\kappa$=0.02 \\\hline
    ADE20K & 0.12 & 0.20 & 0.29 & 0.24 & 0.14 & {\bf 0.09}\\
    Pascal & 0.11 & 0.25 & 0.35 & 0.27 & 0.13 & {\bf 0.11}\\\hline
    ADE20K$\rightarrow$Pascal & 0.46 & 0.26 & 0.12 & 0.29 & 0.34 & {\bf 0.19}\\\hline
    \end{tabular}
    \caption{Results on Broden dataset with varying value of regularization strength for O-CBM and Y-CBM. ADE20K and Pascal rows compare the distance between the explanations computed from the ground-truth exactly like Table~\ref{tab:real:expls}. The last row compares how much the explanations drifted between the datasets exactly like in Table~\ref{tab:real:shift}. Lower is better everywhere. Observe that \ours{} has high explanation quality while also being relatively more robust to data shift.}
    \label{tab:hparam}
\end{table}

\section{Evaluation using CUB dataset}
\citet{WahCUB_200_2011} released a bird dataset called CUB with 11,788 images and 200 bird species. Moreover, each bird image is annotated with one of 312 binary attributes indicating the presence or absence of a bird feature. \citet{Koh2020} popularized an improved version of the dataset that retained only 112 clean attribute annotations. We evaluate using the cleaner dataset released by \citet{Koh2020} owing to their popularity in evaluating CBMs. We train a pretrained ResNet-18 model using the training split of CUB dataset. We then compute the explanation (i.e. saliency of concepts) using the test split. Similar to the evaluation of Section~\ref{sec:expt:real}, we quantify the quality of explanations using distance from explanations computed using true concept annotations when using \simple{}. 

\begin{table}
\centering
\begin{tabular}{|c|r|rrr|}
    & TCAV & O-CBM & Y-CBM & \ours{}\\ 
    \hline
    Top 3 & {\bf 0.38} & 0.43 & 0.46 & {\bf 0.43} \\
    Top 5 & {\bf 0.39} & 0.45 & 0.46 & {\bf 0.44} \\
    Top 10 & {\bf 0.37} & 0.43 & 0.45 & {\bf 0.42}\\
    Top 20 & {\bf 0.36} & 0.40 & 0.41 & {\bf 0.39} 
\end{tabular}
\caption{Distance of top-k salient concepts computed using \simple{} and different estimation methods shown in the first row (lower the better). TCAV does well overall and \ours{} performs the best among methods without access to concept annotations.}
\label{fig:cub:l1_dist}
\end{table}

\begin{table}
\centering
\begin{tabular}{|c|r|rrr|}
    & TCAV & O-CBM & Y-CBM & \ours{}\\    
    \hline
    Top 3 & {\bf 0.22} & 0.17 & 0.13 & {\bf 0.20} \\
    Top 5 & {\bf 0.5} & 0.33 & 0.28 & {\bf 0.45} \\
    Top 10 & {\bf 1.68} & 1.18 & 1.02 & {\bf 1.34} \\
    Top 20 & {\bf 5.16} & 4.185 & 3.935 & {\bf 4.51} 
\end{tabular}
\caption{Average overlap between top-k salient concepts computed using \simple{} and different estimation methods shown in the first row (higher the better). TCAV does well overall and \ours{} performs the best among methods without access to concept annotations.}
\end{table}

\end{document}